\documentclass{article} 
\usepackage{iclr2026_conference_lla,times}

\usepackage{hyperref}
\usepackage{url}

\usepackage{MyPreamble}
\usepackage{amsmath,amsfonts,amssymb,amsthm}

\newcommand\OursFull{\underline{{\bfseries Re}}inforcement Routing for \underline{{\bfseries Mix}}ture-of-LoRAs}
\newcommand\Ours{ReMix}

\title{\Ours{}: Reinforcement Routing for Mixtures of LoRAs in LLM Finetuning}

\author{Ruizhong Qiu\thanks{Work done during an internship at Meta AI.} \\
University of Illinois\\Urbana-Champaign, IL, USA \\
\texttt{rq5@illinois.edu} \\
\And
Hanqing Zeng, Yinglong Xia, Yiwen Meng, Ren Chen \\
Meta AI\\Menlo Park, CA, USA \\
\texttt{\{zengh,yxia,ywmeng,renchen\}@meta.com} \\
\And
Jiarui Feng \\
Washington University\\St. Louis, WA, USA \\
\texttt{feng.jiarui@wustl.edu} \\
\And
Dongqi Fu \\
Meta AI\\Sunnyvale, CA, USA \\
\texttt{dongqifu@meta.com} \\
\And
Qifan Wang, Jiayi Liu, Jun Xiao, Xiangjun Fan, Benyu Zhang, Hong Li \\
Meta AI\\Menlo Park, CA, USA \\
\texttt{\{wqfcr,liujiayi,junxiao,maxfan,byzhang,hongli\}@meta.com} \\
\And
Zhining Liu, Hyunsik Yoo, Zhichen Zeng, Tianxin Wei, Hanghang Tong \\
University of Illinois\\Urbana-Champaign, IL, USA \\
\texttt{\{liu326,hy40,zhichenz,twei10,htong\}@illinois.edu} \\
\\
}

\iclrfinalcopy 
\begin{document}

\maketitle

\begin{figure*}[t]
\centering
\includegraphics[width=\linewidth]{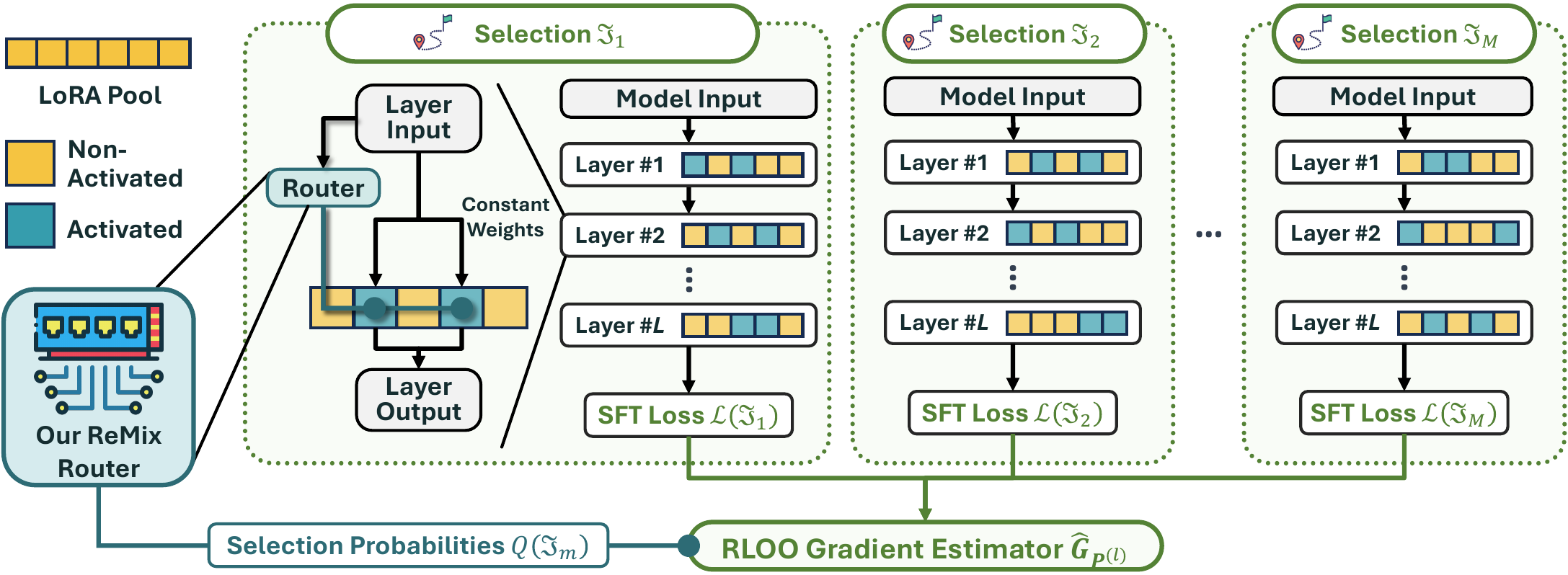}
\vspace{-0.5em}
\caption{Finetuning procedure of our proposed \Ours{}.}
\label{fig:main}
\end{figure*}

\begin{abstract}
Low-rank adapters (LoRAs)
are a parameter-efficient finetuning technique that injects trainable low-rank matrices into pretrained models to adapt them to new tasks. Mixture-of-LoRAs models expand neural networks efficiently by routing each layer input to a small subset of specialized LoRAs of the layer. Existing Mixture-of-LoRAs routers assign a learned routing weight to each LoRA to enable end-to-end training of the router. Despite their empirical promise, we discover, both theoretically and empirically, that the routing weights often collapse to only one LoRA even when we activate $k>1$ LoRAs during finetuning. 
When one LoRA has a dominantly large weight, then the computation of the other $k-1$ LoRAs are essentially \textbf{wasted} because using $k>1$ would have similar accuracy to $k=1$.
This essentially limits the number of effective LoRAs and thus severely hinders the realized expressive power of existing Mixture-of-LoRAs models.
In this work, we attribute this weakness to the nature of learnable routing weights and rethink the fundamental design of the router. To address this critical issue, we propose a simple yet effective router design that we call \emph{\OursFull{}} (\Ours). Our key idea is using \emph{non-learnable} routing weights to ensure all active LoRAs to be equally effective, with no single LoRA dominating the routing weights. However, such non-learnable routing weights make it infeasible to directly train routers via gradient descent. In response, we further propose an unbiased gradient estimator for the router and employ the reinforce leave-one-out (RLOO) technique to reduce the variance of the estimator. 
Our gradient estimator also enables to scale up training compute to boost the predictive performance of our \Ours{}. Extensive experiments demonstrate that our proposed \Ours{} significantly outperforms state-of-the-art parameter-efficient finetuning methods under a small number of activated parameters.
\end{abstract}


\addtocontents{toc}{\protect\setcounter{tocdepth}{0}} 



\begin{wrapfigure}{r}{0.4\linewidth}
\centering
\vspace{-4em}
\includegraphics[width=0.85\linewidth]{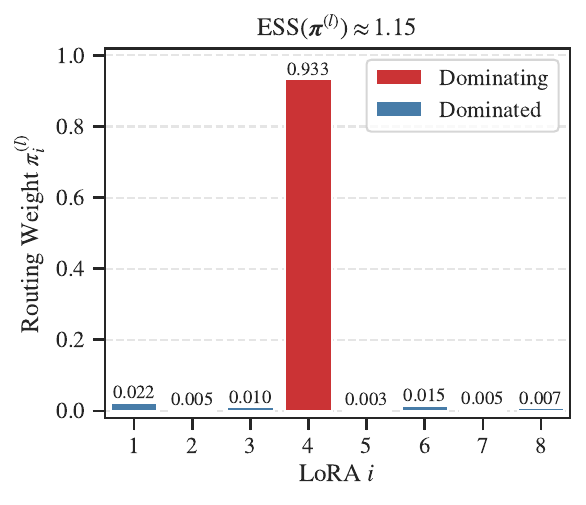}
\vspace{-0.75em}
\caption{We often observe that \textbf{only one LoRA} has a dominating routing weight that is close to one at deeper layers even when we activate $k>1$ LoRAs during finetuning. Consequently, the computation of the other $k-1$ LoRAs are essentially \textbf{wasted} because using $k>1$ would have similar accuracy to $k=1$. (The dominating LoRA can \textbf{differ on different inputs}.)}
\label{fig:ess-w}
\vspace{-1em}
\end{wrapfigure}

\section{Introduction}

Parameter-efficient fine-tuning (PEFT) aims to reduce the number of trainable parameters while achieving strong task performance (e.g., \citealp{he2022sparseadapter, ruckle2020adapterdrop, jie2023revisiting}). 
Among PEFT methods, low-rank adapters (LoRAs, \citealp{hu2021lora}) have become particularly prominent due to their simplicity and effectiveness. By injecting lightweight low-rank matrices into pretrained weight matrices, LoRAs allow downstream adaptation with a small fraction of trainable parameters, making them particularly attractive for resource-constrained settings and large-scale multi-task deployments.

Building on the success of LoRAs, researchers have proposed Mixture-of-LoRAs to further enhance parameter efficiency and expressive power (e.g., \citealp{huang2023lorahub,wang2023multilora,hydralora,smore}). The key idea is to route each input through a small pool of LoRAs per layer, thereby enabling specialization of LoRAs across different input distributions. Central to this framework is the router, which assigns routing weights across a pool of multiple LoRAs. Current approaches rely on learned routing weights, trained jointly with task objectives via gradient descent. In principle, such routers should flexibly allocate inputs across LoRAs and balance capacity usage.

Despite their empirical promise, we theoretically reveal a striking weakness of existing Mixture-of-LoRAs routers: routing weights can be extremely imbalanced, often collapsing to a very small number of LoRAs with high probability. 
Furthermore, we empirically observe that the imbalance even worsens as finetuning progresses, where the effective number of LoRAs \textbf{drops to 1 quickly} even though we activate $k>1$ LoRAs during finetuning. This essentially disables all other LoRAs, thereby limiting the realized expressive power of the mixture. When only one LoRA has a dominating weight, the computation of the other $k-1$ LoRAs are essentially \textbf{wasted} because using $k>1$ would have similar accuracy to $k=1$. We call this critical issue \emph{routing weight collapse}.

To address this critical limitation, we revisit the fundamental design of the router. Instead of relying on learned continuous weights that tend to result in routing weight collapse, we propose a simple yet effective design called \emph{\OursFull{}} (\Ours{}), which enforces a constant routing weights across all {\em activated LoRAs}. This ensures that all active LoRAs contribute equally, avoiding collapse into a single dominant LoRA. Since non-learnable weights prevent direct training via backpropagation, we reformulate the router training problem as reinforcement learning (RL), where we view the supervised finetuning loss as the negative reward and the router as the policy model of RL. We then propose an unbiased, RLOO-based gradient estimator tailored for our proposed router. This unbiased estimator enables stable training and scales efficiently to large compute budgets, unlocking the full potential of mixture-based parameter-efficient finetuning. Our main contributions are as follows.
\begin{itemize}
\item {\bf Theoretical insights on routing weight collapse}: 
We theoretically reveal and empirically observe a fundamental limitation of routers: We observe that for each given input, often only one LoRA has a dominating routing weight that is close to one. This extreme imbalance essentially disables all other LoRAs and severely limits the expressive power of the model.
\item {\bf Simple yet effective router}: To address routing imbalance, we propose a new router design with a constant routing weight across all activated LoRAs. Our design does not introduce any additional inference cost over existing Mixture-of-LoRAs methods.
\item {\bf Reinforcement learning for router training}: To address the non-differentiability of our proposed router, we reformulate the router training problem as reinforcement learning and propose an unbiased, RLOO-based gradient estimator tailored for our proposed router.
\item {\bf Empirical evaluation}: Through extensive experiments across diverse benchmarks, we demonstrate that \Ours{} consistently outperforms state-of-the-art parameter-efficient finetuning methods under the \textbf{same parameter budgets}.
\end{itemize}

\begin{wrapfigure}{r}{0.5\linewidth}
\centering
\vspace{-2.5em}
\includegraphics[width=0.8\linewidth]{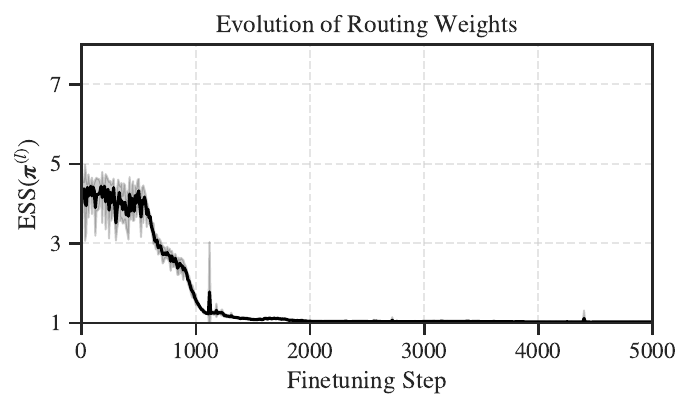}
\vspace{-1em}
\caption{Routing weight collapse even \textbf{worsens as finetuning progresses}. The effective support size $\OP{ESS}(\BM\pi^{(l)})$ often \textbf{drops to 1 quickly} during finetuning.}
\vspace{-2em}
\label{fig:ess-line}
\end{wrapfigure}

\section{Motivation: Routing Weight Collapse}
\label{sec:imb}

In this section, we analyze and reveal a critical limitation of existing Mixture-of-LoRAs routers: the extreme imbalance in routing weights assigned to different LoRAs. After introducing preliminaries in Section~\ref{ssec:prelim}, we first make a fundamental theoretical analysis showing that the number of effective LoRAs per layer is severely limited. Then, we corroborate this finding with empirical evidence from our experiments.

\subsection{Preliminaries: Mixture of LoRAs}\label{ssec:prelim}

Mixture-of-LoRAs is a type of parameter-efficient adapter that enhances the capacity of large models using only a small number of LoRAs and a lightweight router to dynamically select the LoRAs for each input. 

Let $D$ denote the hidden dimensionality of the model. Following prior work, we apply LoRAs to feedforward layers in the LLM, and all other layers are frozen. Let $\BM x^{(l)},\BM y^{(l)}\in \mathbb{R}^D$ denote the input and the output of feedforward layer $l$ ($l=1,\dots,L$), respectively.
Let $n$ denote the number of LoRAs we use in the mixture. Each LoRA $i=1,\dots,n$ is a linear map parameterized as a low-rank decomposition $\BM B_i^{(l)}\BM A_i^{(l)}\in\BB R^{D\times D}$, where $\BM A_i^{(l)} \in \mathbb{R}^{r \times D}$ and $\BM B_i^{(l)} \in \mathbb{R}^{D \times r}$ are learnable parameters, and $r\ll D$ is the rank of LoRAs. A \emph{router} of a layer $l$ is a small neural network parameterized by a matrix $\BM P^{(l)} \in \mathbb{R}^{n \times D}$ that predicts a categorical distribution over $n$ LoRAs via the softmax operation:
\AL{
\BM\pi^{(l)} := \softmax(\BM P^{(l)}\BM x^{(l)}) \in \mathbb{R}^n.
}
Here, $\pi_i^{(l)}:=(\BM\pi^{(l)})_i$ represents the routing weight assigned to the $i$-th LoRA. 
Given the routing weights, the output of a typical Mixture-of-LoRAs layer is computed as:
\AL{
\BM y^{(l)} := \BM W^{(l)}\BM x^{(l)} + \sum_{i=1}^n \pi_i^{(l)} \BM B_i^{(l)} \BM A_i^{(l)} \BM x^{(l)}.
}
where $\BM W^{(l)}\in \mathbb{R}^{D\times D}$ denotes the frozen weight of the layer $l$. This formulation intends to differentiably select a specialized subset of LoRAs for each given layer input $\BM x^{(l)}$.

\subsection{Theoretical Analysis}

We make a fundamental theoretical analysis showing that the number of effective LoRAs is severely limited. Recall that the output of a Mixture-of-LoRAs layer is a weighted sum of the LoRA outputs, where the routing weights are typically normalized via a softmax function. While this design allows for end-to-end training, we show that it introduces a strong tendency for the router to concentrate most of the weight on only one or two LoRAs.


To quantify the effective number of LoRAs, we use the \emph{effective support size} notion from information theory. For routing weights $\BM\pi^{(l)}\ne\BM0$, the effective support size (ESS) 
is defined as \citep{grendar2006entropy}
\AL{\OP{ESS}(\BM\pi^{(l)}):=\frac{\big(\sum_{i=1}^n|\pi^{(l)}_i|\big)^{\!2}}{\sum_{i=1}^n|\pi^{(l)}_i|^2}=\bigg(\frac{\|\BM\pi^{(l)}\|_1}{\|\BM\pi^{(l)}\|_2}\bigg)^{\!\!2}.}
The intuition of $\OP{ESS}(\BM\pi^{(l)})$ is that it measures the number of LoRAs with relatively large routing weights. For example, if $\BM\pi^{(l)}$ is one-hot, then we have $\OP{ESS}(\BM\pi^{(l)})=1$; if $\BM\pi^{(l)}$ is uniform over $n$ LoRAs, then we have $\OP{ESS}(\BM\pi^{(l)})=n$. Note that $\OP{ESS}(\BM\pi^{(l)})$ concerns only about the utilization of LoRAs for each given input, not the overall utilization of each LoRA over the entire dataset. With the help of this notion of ESS, we formally state our theoretical observation in the following Theorem~\ref{THM:imb}.

\begin{THM}[routing weight collapse]\label{THM:imb}
Suppose that the router parameter matrix $\BM P^{(l)}$ follows i.i.d.\ Gaussian initialization with variance $\sigma^2>0$ (e.g., Kaiming initialization, \citealp{he2015delving}).
Then for any $0<\delta<1$, with probability at least $1-\delta$, the effective support size of $\BM\pi^{(l)}$ is at most
\AM{\OP{ESS}(\BM\pi^{(l)})\le\!\left(\!1\!+\!\frac1{\exp\!\left(\!\frac{\delta\sigma\|\BM x^{(l)}\|_2}{\frac32\sqrt{\frac{\uppi}{\ln3}\ln n}+\frac1{\sqrt{2\uppi}\,2^{n-\log_2n-1}}}-\ln(n-1)\!\right)}\!\right)^{\!\!\!2}\!\!.}
\end{THM}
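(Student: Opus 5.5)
The approach is to reduce the statement to the gap between the largest and second-largest logits. Write $\BM z:=\BM P^{(l)}\BM x^{(l)}$. Since the rows of $\BM P^{(l)}$ are i.i.d.\ Gaussian, the coordinates $z_1,\dots,z_n$ are i.i.d.\ $\mathcal N(0,\sigma^2\|\BM x^{(l)}\|_2^2)$, so $\BM z=\sigma\|\BM x^{(l)}\|_2\,\BM g$ with $\BM g$ standard Gaussian in $\BB R^n$. Let $g_{(1)}\ge g_{(2)}\ge\cdots$ be the order statistics, and set $\Delta:=\sigma\|\BM x^{(l)}\|_2\,(g_{(1)}-g_{(2)})$.

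\textbf{Deterministic step.} Because $\|\BM\pi^{(l)}\|_1=1$ and $\|\BM\pi^{(l)}\|_2\ge\pi_{\max}$, we have $\OP{ESS}(\BM\pi^{(l)})\le 1/\pi_{\max}^2$. Each non-maximal logit is at most $z_{(1)}-\Delta$, so
\[
\pi_{\max}\ge\frac{1}{1+(n-1)e^{-\Delta}}.
\]
Combining the two,
\[
\OP{ESS}(\BM\pi^{(l)})\le\bigl(1+\exp(-(\Delta-\ln(n-1)))\bigr)^2 .
\]
This already has exactly the shape of the claimed bound. It therefore suffices to show that, with probability at least $1-\delta$,
\[
\Delta\ge\frac{\delta\,\sigma\|\BM x^{(l)}\|_2}{M_n},
\qquad
M_n:=\tfrac32\sqrt{\tfrac{\uppi}{\ln3}\ln n}+\frac{1}{\sqrt{2\uppi}\,2^{n-\log_2 n-1}} .
\]

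\textbf{Anti-concentration step.} We need $\Pr[g_{(1)}-g_{(2)}<\delta/M_n]\le\delta$, i.e.\ a density-type bound $\Pr[G<t]\le M_n t$ for the top gap $G$. Condition on which index attains the maximum and on the values of the others: the top gap is small only when the maximum lands in a window of length $t$ just above the second largest. Then
\[
\Pr[G<t]\le n\,\BB E\bigl[\Pr[g_1\in(m,m+t]]\bigr]\le t\,\BB E\bigl[n\,\varphi(m)\bigr],
\]
where $m$ is the maximum of $n-1$ i.i.d.\ standard Gaussians and $\varphi$ the standard normal density, using that $\varphi$ is decreasing on $[0,\infty)$ when $m\ge0$. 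Hence the task is to bound $\BB E[n\varphi(m)]$ by $M_n$.

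Split on the sign of $m$. The event $m<0$ has probability $2^{-(n-1)}$, and there $\varphi\le 1/\sqrt{2\uppi}$. This contributes $n\,2^{-(n-1)}/\sqrt{2\uppi}=1/(\sqrt{2\uppi}\,2^{n-\log_2 n-1})$, which is exactly the second term of $M_n$. On $m\ge0$, I would relate $n\varphi(m)$ to the hazard rate of the Gaussian maximum. The heuristic is $n\varphi(m)\approx m\cdot n\bar\Phi(m)\approx m$ near the typical value of the maximum, so the term is of order $\BB E[m_+]\le\sqrt{2\ln n}$. A Mills-ratio inequality or a direct integration of $n(n-1)\varphi(u)^2\Phi(u)^{n-2}$ should produce the constant $\tfrac32\sqrt{\uppi\ln n/\ln3}$; the $\ln 3$ presumably enters through the base case $n=3$ or through a bound like $\bar\Phi(u)\le e^{-u^2/2}/2$.

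\textbf{Main obstacle.} The hardest step is this last estimate of $\BB E[n\varphi(m)\mathbf 1_{m\ge0}]$ with the precise constants. I would first try writing the expectation as $\int_0^\infty n(n-1)\varphi(u)^2\Phi(u)^{n-2}\,du$, bounding $\varphi(u)\le\sqrt{\ln\ldots}$ via $\Phi^{n-2}$ tail arguments, and then optimizing a split point $u_0\asymp\sqrt{\ln n}$: below $u_0$ use $\Phi^{n-2}$ small, above $u_0$ use the Gaussian tail. Finally, substitute $t=\delta/M_n$ and scale by $\sigma\|\BM x^{(l)}\|_2$ to conclude.
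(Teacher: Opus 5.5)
Your deterministic reduction and your conditioning argument coincide with the paper's proof. The identity $\Pr[G<t]=n\,\mathbb{E}[\Phi(m+t)-\Phi(m)]$ is exactly the paper's integral against the joint density of the top two order statistics, and splitting on the sign of $m$ reproduces the second term of $M_n$. (The intermediate inequality $\le t\,\mathbb{E}[n\varphi(m)]$ fails on $\{m<0\}$, where $\varphi$ is increasing. There you need $\Phi(m+t)-\Phi(m)\le t/\sqrt{2\uppi}$ directly, which is what your split effectively uses.)

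The genuine gap is the step you flag yourself. The estimate
\[
n(n-1)\int_0^\infty\varphi(u)^2\Phi(u)^{n-2}\,du\le\tfrac32\sqrt{\tfrac{\uppi}{\ln3}\ln n}
\]
is the only non-routine content of the theorem (the paper's Lemma~\ref{LEM:6}, resting on Lemmas~\ref{LEM:4}--\ref{LEM:5}), and you give only a heuristic for it. Moreover, the plan you sketch would not deliver it. With a single split point $u_0$, bound the upper piece either by $n\varphi(u_0)$ or by dropping $\Phi^{n-2}$. Keeping that bound $O(\sqrt{\ln n})$ forces $n\bar\Phi(u_0)=O(1)$. But then $\Phi(u_0)^{n-2}\approx e^{-n\bar\Phi(u_0)}$ is bounded below by a constant, so any crude bound on the piece below $u_0$ is at least of order $n$. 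Since $n\bar\Phi(m)$ behaves like an $\mathrm{Exp}(1)$ variable, all scales must be integrated at once.

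To close the gap you can follow the paper:
\begin{itemize}
\item substitute $v=1-\Phi(u)$;
\item use the Mills-ratio consequence $\varphi(\Phi^{-1}(1-v))\le v\sqrt{2\ln(1/v)}$ together with $(1-v)^{n-2}\le e^{-(n-2)v}$;
\item rescale $t=(n-2)v$ and bound the resulting $\int te^{-t}\sqrt{\ln((n-2)/t)}\,dt$ by a Gamma integral;
\item check that the ratio to $\sqrt{\ln n}/(n(n-1))$ is largest at $n=3$, which is indeed where $\tfrac32\sqrt{\uppi/\ln3}$ comes from.
\end{itemize}

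Alternatively, your own heuristic can be made exact, giving a shorter proof with a better constant. Since $\varphi'(u)=-u\varphi(u)$ and $(n-1)\varphi\Phi^{n-2}=(\Phi^{n-1})'$, one integration by parts gives
\[
n(n-1)\int_0^\infty\varphi^2\Phi^{n-2}\,du=-\frac{n\varphi(0)}{2^{n-1}}+\int_0^\infty u\,n\varphi(u)\Phi(u)^{n-1}\,du=\mathbb{E}[Z^+]-\frac{n}{\sqrt{2\uppi}\,2^{n-1}},
\]
where $Z:=\max_{i\le n}g_i$ has density $n\varphi\Phi^{n-1}$. Adding the $\{m<0\}$ contribution cancels the last term, so $\Pr[G<t]\le t\,\mathbb{E}[Z^+]$. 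Moreover $\mathbb{E}[Z^+]=\mathbb{E}[Z]+\mathbb{E}[Z^-]\le\sqrt{2\ln n}+\frac{1}{\sqrt{2\uppi}\,2^{n-1}}$. This uses Jensen on $\mathbb{E}e^{\lambda Z}\le ne^{\lambda^2/2}$, and $\mathbb{E}[Z^-]=\int_0^\infty\Phi(-s)^n\,ds\le2^{-(n-1)}\int_0^\infty\bar\Phi(s)\,ds$. As $\sqrt2<\tfrac32\sqrt{\uppi/\ln3}\approx2.54$, this is at most $M_n$, which completes your argument. It does so for every $n\ge2$, whereas the paper's Lemma~\ref{LEM:6} assumes $n\ge3$.
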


The proof of Theorem~\ref{THM:imb} is deferred to Appendix~\ref{app:theo-imb}. Our Theorem~\ref{THM:imb} shows that with high probability, only an extremely small number of LoRAs have relatively large routing weights. For instance, if $\sigma=1$, and there are $n=8$ LoRAs, and $\BM x^{(l)}$ is a Rademacher random vector in $\BB R^{D=1024}$, then our Theorem~\ref{THM:imb} shows that with probability at least 84.19\%, {\bfseries at most two} LoRAs have relatively large routing weights. Since each routing weight is a coefficient in front of each LoRA, a relatively small routing weight would essentially disable that LoRA. Moreover, those extremely small routing weights also vanish the gradient back-propagated to the corresponding LoRAs and consequently hinder the learning process of these LoRAs. Therefore, this phenomenon severely limits the realized expressive power and performance of the Mixture-of-LoRAs model.


\subsection{Empirical Analysis}

To further validate our theoretical result in Theorem~\ref{THM:imb}, we conduct a case study on the routing weights across LoRAs in MixLoRA \citep{mixlora}, a popular Mixture-of-LoRAs method. Specifically, we track the routing weights of the last layer throughout the training process on the GSM8K dataset (a mathematical reasoning dataset) and compute the distributions and the ESS of the routing weights.

To visualize the distribution of routing weights, we plot a typical histogram of routing weights during finetuning, shown in Figure~\ref{fig:ess-w}. We often observe that \textbf{only one LoRA} has a dominating routing weight close to one at deeper layers while \textbf{all other seven LoRAs} have negligibly small routing weights. The observation echoes our Theorem~\ref{THM:imb} that the learned routing weights are indeed extremely imbalanced. The extremely limited number of effective LoRAs severely restricts the realized expressive power of the Mixture-of-LoRAs model: when only one LoRA has a dominating weight, the computation of the other $k-1$ LoRAs are essentially \textbf{wasted} because using $k>1$ would have similar accuracy to $k=1$.

To further study how the distribution of routing weights evolve over the finetuning process, we plot the ESS of the worst routing weights at each training step, as shown in Figure~\ref{fig:ess-line}. In fact, the imbalance even worsens as the finetuning process progresses. We often observe that the effective support size $\OP{ESS}(\BM\pi^{(l)})$ often \textbf{drops to 1} quickly during finetuning. For instance, even though the ESS is around 4 at step 0, the ESS quickly decreases to 1 since only step 1000 and never increases thereafter. As a remark, different inputs can have different dominating LoRAs, but they still suffer from routing weight collapse.


These results highlight a fundamental limitation of current Mixture-of-LoRAs routers: despite the potential for increased expressivity via multiple LoRAs, the model essentially activates only an extremely small subset for each given input. This motivates our proposed method, which aims to ensure a more balanced and effective use of other available LoRAs.

\section{Simple yet Effective Method: \Ours{}}


In this section, we propose a simple yet effective router design called \emph{\OursFull{}} (\Ours{}). First, we introduce the adapter architecture in Section~\ref{ssec:method-arch}. Then, we describe the finetuning procedure in Section~\ref{ssec:method-train} and the inference procedure in Section~\ref{ssec:method-infer}.

\subsection{Adapter Architecture: Non-Learnable Weight}\label{ssec:method-arch}

In this subsection, we introduce the adapter architecture of our proposed method \Ours{}.

Given layer input $\BM x^{(l)}\!\in\!\mathbb{R}^D$, we first produce an $n$-way categorical \emph{routing distribution} $\BM q^{(l)}:=\softmax(\BM P^{(l)}\BM x^{(l)})\in\BB R_{\ge0}^{n}$ over the $n$ LoRAs, where $\BM P^{(l)}\in\BB R^{n\times D}$ denotes the learnable parameter matrix of the router. Then, we use the routing distribution $\BM q^{(l)}$ to select the $k$ LoRAs $\CAL I^{(l)}:=\{i_1^{(l)},\dots,i_k^{(l)}\}$ to activate. The LoRA selection procedure differs between finetuning and inference, which we will describe later in Sections~\ref{ssec:method-train} \& \ref{ssec:method-infer}.

To address the extreme imbalance of routing weights in existing Mixture-of-LoRAs models (Section~\ref{sec:imb}), we assign the a constant routing weight $\omega>0$ to all the $k$ activated LoRAs and zero routing weights to all non-activated LoRAs. Formally, our routing weights $\BM\pi^{(l)}$ are defined as
\AL{
\pi^{(l)}_i:=\omega\BBM1_{[i\in\CAL I^{(l)}]}=\begin{cases}
\omega,&\text{if }i\in\CAL I^{(l)},\\
0,&\text{if }i\notin\CAL I^{(l)},
\end{cases}\quad i=1,\dots,n,
}
where $\omega$ is either LoRA-type $\omega:={2}/{kr}$ \citep{hu2021lora} or rsLoRA-type $\omega:={2}/{\sqrt{kr}}$ \citep{kalajdzievski2023rank}. In fact, our method is not sensitive to $\omega$, as shown in Section~\ref{ssec:exp-omega}.

Notably, our design ensures that $\OP{ESS}(\BM\pi^{(l)})=k$, which is in stark contrast to existing learnable routing weights (Theorem~\ref{THM:imb}). Finally, we compute the layer output $\BM y^{(l)}\in\BB R^D$ as a $\BM\pi^{(l)}$-weighted sum over $k$ activated LoRAs. Using the sparse nature of our routing weights $\BM\pi^{(l)}$, the computation of layer output $\BM y^{(l)}$ can be simplified as follows:
\AL{
\BM y^{(l)} :={}&\BM W^{(l)}\BM x^{(l)} + \sum_{i=1}^n\pi^{(l)}_i\BM B_{i}^{(l)} \BM A_{i}^{(l)} \BM x^{(l)}
\BM W^{(l)}\BM x^{(l)} + \omega\sum_{j=1}^k \BM B_{i_j^{(l)}}^{(l)} \BM A_{i_j^{(l)}}^{(l)} \BM x^{(l)}.
}

\subsection{Finetuning Procedure: RLOO}\label{ssec:method-train}

In this subsection, we describe how to train our proposed \Ours{} during finetuning. 

Let $\FR I:=(\CAL I^{(1)},\dots,\CAL I^{(L)})$ denote the collection of activated LoRAs of the entire LLM for a given model input, and we call $\FR I$ a \emph{selection}. Let $\CAL L(\FR I)$ denote the supervised finetuning (SFT) loss when activated LoRAs are $\FR I$. Regarding LoRA parameters $\BM A^{(l)}_i,\,\BM B^{(l)}_i$, since the LLM output is differentiable w.r.t.\ LoRA parameters, we can simply use their gradients $\BM G_{\BM A^{(l)}_i}:=\nabla_{\BM A^{(l)}_i}\CAL L(\FR I),\,\BM G_{\BM B^{(l)}_i}:=\nabla_{\BM B^{(l)}_i}\CAL L(\FR I)$ to train them.

Regarding router parameters, however, the LLM output is not differentiable w.r.t.\ router parameters $\BM P^{(l)}$ because routing weights $\pi^{(l)}_i$ are a constant hyperparameter $\omega$. Consequently, we cannot directly compute their gradients $\nabla_{\BM P^{(l)}_i}\CAL L(\FR I)$ as it is not defined. To address this non-differentiability, we propose sampling each $\CAL I^{(l)}$ from the corresponding routing distribution $\BM q^{(l)}$ so that $\Exp_{\CAL I^{(l)}\sim\BM q^{(l)}}[\CAL L(\FR I)]$ depends on router parameters $\BM P^{(l)}$. This enables $\Exp_{\CAL I^{(l)}\sim\BM q^{(l)}}[\CAL L(\FR I)]$ to be differentiable w.r.t.\ router parameters $\BM P^{(l)}$. Hence, we propose using $\BM G_{\BM P^{(l)}}:=\nabla_{\BM P^{(l)}}\Exp_{\CAL I^{(l)}\sim\BM q^{(l)}}[\CAL L(\FR I)]$ as a \emph{surrogate gradient} of $\BM P^{(l)}$. Formally, given the routing distribution $\BM q^{(l)}:=\softmax(\BM P^{(l)}\BM x^{(l)})$, we sample $k$ LoRAs $(i^{(l)}_1,\dots,i^{(l)}_k)\sim \BM q^{(l)}$ from $\BM q^{(l)}$ without replacement to compose the activated LoRA subset $\CAL I^{(l)}:=(i^{(l)}_1,\dots,i^{(l)}_k)$, where sampling without replacement ensures that the $k$ activated LoRAs are mutually distinct.

However, due to the exponentially many possibilities of $\FR I$, it is computationally intractable to straightforwardly compute $\BM G_{\BM P^{(l)}}=\nabla_{\BM P^{(l)}}\Exp_{\CAL I^{(l)}\sim\BM q^{(l)}}[\CAL L(\FR I)]$ by definition. To address this intractability, we alternatively consider router training as a \textbf{reinforcement learning} (RL) problem, where we view the SFT loss $\CAL L(\FR I)$ as the negative reward and the routers $\BM q^{(l)}$ as the policy model. With this alternative view, we are able to employ the policy gradient estimator in RL to estimate the surrogate gradient $\BM G_{\BM P^{(l)}}$. Formally, we independently sample $M$ selections $\FR J_1,\dots,\FR J_M$, where $M$ represents the training compute budget. Write each selection as $\FR I_m=:(\CAL I^{(l)}_m)_{l=1}^L=:((i_{m,j}^{(l)})_{j=1}^k)_{l=1}^L$ ($m=1,\dots,M$), where $\CAL I^{(l)}_m$ denotes the ordered set of selected LoRAs at the $l$-th layer in the $m$-th selection $\FR J_m$, and $i_{m,j}^{(l)}$ denotes the $j$-th selected LoRA at the $l$-th layer in the $m$-th selection $\FR J_m$. Due to sampling without replacement, the probability of each selection $\FR J_m$ is
$ 
Q(\FR J_m):=\prod_{l=1}^L\prod_{j=1}^k\frac{q^{(l)}_{i^{(l)}_{m,j}}}{1-\sum_{j'=1}^{j-1}q^{(l)}_{i^{(l)}_{m,j'}}}.
$ 
Under this factorization, 
we further derive the RLOO gradient estimator \citep{kool2019buy} to estimate the surrogate gradient $\BM G_{\BM P^{(l)}}$:
\AL{
&\HAT{\BM G}_{\BM P^{(l)}}\!:=\!{}\frac1{M\!-\!1}\!\!\sum_{m=1}^M\!\!\big(\CAL L(\FR I_m)-\BAR{\CAL L}\big)\nabla_{\BM P^{(l)}}\log Q(\FR J_m)
\frac1{M\!-\!1}\!\!\sum_{m=1}^M\!\!\big(\!\CAL L(\FR I_m)-\BAR{\CAL L}\!\big)\!\sum_{j=1}^k\!\nabla_{\BM P^{(l)}}\log\frac{q^{(l)}_{i^{(l)}_{m,j}}}{1\!-\!\!\!\sum\limits_{j'=1}^{j-1}\!q^{(l)}_{i^{(l)}_{m,j'}}},\nonumber
}
where $\BAR{\CAL L}:=\frac1M\sum_{m=1}^M\CAL L(\FR I_m)$ is the average SFT loss across the $M$ selections.
It can be shown that our RLOO gradient estimator is unbiased: $\Exp_{\FR J_1,\dots,\FR J_m}[\HAT{\BM G}_{\BM P^{(l)}}]=\BM G_{\BM P^{(l)}}$.


\subsection{Inference Procedure: Top-$k$ Selection}\label{ssec:method-infer}

In this subsection, we describe how our proposed \Ours{} selects the LoRAs to activate during inference. While it is possible to randomly sample the LoRAs like the finetuning procedure, here we propose a better, theoretically optimal approach to LoRA selection.

Our following Theorem~\ref{THM:topk} shows that the optimal strategy is in fact \textbf{top-$k$ selection} as long as the router is trained sufficiently well.

\begin{THM}[optimality of top-$k$ selection]\label{THM:topk}
Let $\CAL I^{(l)*}=\{i_1^{(l)*},\dots,i_k^{(l)*}\}$ denote the optimal subset of LoRAs for a given model input. As long as the router $\BM q^{(l)}$ is trained sufficiently well such that $\Prb_{\CAL I^{(l)}\sim\BM q^{(l)}}[\CAL I^{(l)}=\CAL I^{(l)*}]>\frac12$,
then the LoRAs $i$ with top-$k$ $q_i^{(l)}$ are \textbf{guaranteed} to constitute the best subset $\CAL I^{(l)*}$: $\mathop{\RM{argtop}_k}_{i=1}^nq^{(l)}_i=\CAL I^{(l)*}$.
\end{THM}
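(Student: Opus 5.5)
The plan is a swap argument by contradiction. Sampling $k$ LoRAs without replacement from $\BM q^{(l)}$ gives a distribution over unordered $k$-subsets. The probability of a subset $\CAL S$ is the sum, over all $k!$ orderings $(s_1,\dots,s_k)$ of $\CAL S$, of the sequential product
\[
\prod_{j=1}^k\frac{q^{(l)}_{s_j}}{1-\sum_{j'<j}q^{(l)}_{s_{j'}}}.
\]
All $q^{(l)}_i>0$ because $\BM q^{(l)}$ is a softmax, so every denominator is positive. I will show a stronger claim than the statement: every $b\in\CAL I^{(l)*}$ and every $a\notin\CAL I^{(l)*}$ satisfy $q^{(l)}_b>q^{(l)}_a$. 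This makes the top-$k$ set unique, with no tie-breaking ambiguity, and equal to $\CAL I^{(l)*}$.

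Suppose instead that some $a\notin\CAL I^{(l)*}$ and $b\in\CAL I^{(l)*}$ satisfy $q^{(l)}_a\ge q^{(l)}_b$. Let $\CAL I':=(\CAL I^{(l)*}\setminus\{b\})\cup\{a\}$, a $k$-subset distinct from $\CAL I^{(l)*}$. The key step is to compare $\Prb[\CAL I']$ with $\Prb[\CAL I^{(l)*}]$ ordering by ordering. Replacing $b$ by $a$ in an ordering is a bijection from orderings of $\CAL I^{(l)*}$ to orderings of $\CAL I'$. Fix an ordering in which $b$ sits at position $t$, and compare factors:
\begin{itemize}
\item Factors before $t$ are unchanged.
\item At position $t$ the numerator goes from $q^{(l)}_b$ to $q^{(l)}_a\ge q^{(l)}_b$, and the denominator is unchanged.
\item At every position $j>t$ the numerator is unchanged. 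The prefix sum now contains $q^{(l)}_a$ instead of $q^{(l)}_b$, so the denominator $1-\sum_{j'<j}q^{(l)}_{s_{j'}}$ weakly decreases and stays positive.
\end{itemize}
Hence each product for $\CAL I'$ is at least the corresponding product for $\CAL I^{(l)*}$. Summing over orderings gives $\Prb[\CAL I^{(l)}=\CAL I']\ge\Prb[\CAL I^{(l)}=\CAL I^{(l)*}]>\frac12$.

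The events $\{\CAL I^{(l)}=\CAL I'\}$ and $\{\CAL I^{(l)}=\CAL I^{(l)*}\}$ are disjoint, so their probabilities sum to more than $1$, which is a contradiction. Therefore $q^{(l)}_b>q^{(l)}_a$ for all such pairs. Consequently the $k$ largest entries of $\BM q^{(l)}$ are exactly the indices in $\CAL I^{(l)*}$, that is, $\mathop{\RM{argtop}_k}_{i=1}^nq^{(l)}_i=\CAL I^{(l)*}$.

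The only delicate point is the monotonicity check on the later denominators, which is elementary once positivity of $\BM q^{(l)}$ is noted. The tie case $q^{(l)}_a=q^{(l)}_b$ needs no separate treatment, because the weak inequality already produces the contradiction.
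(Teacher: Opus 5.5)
Your proof is correct and takes essentially the paper's route. Your ordering-by-ordering exchange comparison is the paper's Lemma~\ref{LEM:swap}, and you reach the same contradiction from two disjoint events each of probability above $\frac12$. The only difference is that the paper chains up to $k-|\mathcal I^*\cap\mathcal I^\dagger|$ such swaps to move from $\mathcal I^*$ to the argtop set, whereas your single swap gives the strict separation $q_b>q_a$ directly, which also shows the top-$k$ set is unique.
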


The proof of Theorem~\ref{THM:topk} is deferred to Appendix~\ref{app:theo-topk}. Notably, our Theorem~\ref{THM:topk} shows that when sampling yields the optimal subset with probability above 
$50\%$, then top-$k$ selection substantially improves this probability to $100\%$. 
Intuitively speaking, as long as the router is trained sufficiently well, then the optimal choices of LoRAs are in fact those $i$ with top-$k$ $q_i^{(l)}$. Motivated by Theorem~\ref{THM:topk}, we employ top-$k$ LoRA selection (instead of random sampling) during inference:
\AL{
\CAL I^{(l)}=\{i_1^{(l)},\dots,i_k^{(l)}\}:=\mathop{\RM{argtop}_k}_{i=1}^nq^{(l)}_i.
}


\section{Experiments}

\begin{table*}[t]
\begin{center}
\caption{Comparison with existing parameter-efficient finetuning methods. Our \Ours{} consistently outperforms all baseline methods while maintaining strong parameter efficiency. We use the \textbf{same parameter count budget} for all methods to ensure fair comparison; for each method, we search for the best parameter count and report the accuracy under the best parameter count.
}
\label{tab:exp-main}
\vspace{-0.5em}
\definecolor{C1}{rgb}{0.84, 0.93, 0.98}%
\resizebox{\linewidth}{!}{
\begin{tabular}{ll|cc|cc|cc|cc}
\toprule
\multirow{2}*{\textbf{Type}} & \multirow{2}*{\textbf{Method}} & \multicolumn{2}{c|}{GSM8K} & \multicolumn{2}{c|}{HumanEval} & \multicolumn{2}{c|}{ARC-c}&\multicolumn{2}{c}{Average}\\
&&Accuracy&Params&Pass@1&Params&Accuracy&Params&Accuracy&Params\\
\midrule
\multirow{2}*{No Tuning}&Zero-Shot & 04.78 & N/A & 13.41 & N/A & 22.03 & N/A & 13.41 & N/A\\
&Few-Shot & 55.95 & N/A & 17.68 & N/A & 81.36 & N/A & 51.66 & N/A\\
\midrule\multirow{3}*{\makecell[l]{Prefix\\Injection}}&Prefix Tuning & 02.65 & 0.034B & 00.00 & 0.034B & 28.47 & 0.004B & 10.37 & 0.024B\\
&Prompt Tuning & 04.70 & 0.000B & 26.22 & 0.000B & 23.73 & 0.000B & 18.22 & 0.000B\\
&P-Tuning & 34.19 & 0.001B & 27.44 & 0.001B & 43.05 & 0.001B & 34.89 & 0.001B\\
\midrule\multirow{4}*{\makecell[l]{Weight\\Modulation}}&(IA)$^\text3$ & 08.57 & 0.001B & 31.10 & 0.001B & 23.39 & 0.001B & 21.02 & 0.001B\\
&LoRA & 59.21 & 0.168B & 26.83 & 0.084B & 83.05 & 0.084B & 56.36 & 0.112B\\
&DoRA & 55.34 & 0.043B & 31.10 & 0.169B & 83.39 & 0.169B & 56.61 & 0.127B\\
&rsLoRA & 62.47 & 0.042B & 28.66 & 0.021B & 82.71 & 0.021B & 57.95 & 0.028B\\
\midrule\multirow{4}*{\makecell[l]{Mixture}}&VB-LoRA & 34.27 & 0.677B & 29.27 & 0.673B & 23.73 & 0.674B & 29.09 & 0.675B\\
&MixLoRA & 61.87 & 0.068B & 28.05 & 0.116B & 82.37 & 0.119B & 57.43 & 0.101B\\
&HydraLoRA & 62.47 & 0.092B & 20.12 & 0.079B & 82.71 & 0.082B & 55.10 & 0.084B\\
&\Ours{} (Ours)\cellcolor{C1} & \textbf{65.66}\cellcolor{C1} & 0.106B\cellcolor{C1} & \textbf{32.93}\cellcolor{C1} & 0.090B\cellcolor{C1} & \textbf{83.73}\cellcolor{C1} & 0.016B\cellcolor{C1} & \textbf{60.77}\cellcolor{C1} & 0.070B\cellcolor{C1}\\
\bottomrule
\end{tabular}
}
\vspace{-2.5em}
\end{center}
\end{table*}

\subsection{Experimental Setup}

\textbf{Baselines.}
%
We comprehensively compare our proposed \Ours{} against various types of baseline methods. 
(i) No tuning methods: testing the base LLM directly under zero-shot and few-shot prompting.
(ii) Prefix injection methods: Prefix Tuning \citep{li2021prefix}, Prompt Tuning \citep{lester2021power}, and P-Tuning \citep{liu2021gpt}.
(iii) Weight modulation methods: (IA)$^\text3$ \citep{liu2022few}, LoRA \citep{hu2021lora}, DoRA \citep{liu2024dora}, and rsLoRA \citep{kalajdzievski2023rank}.
(iv) Mixture methods: VB-LoRA \citep{li2024vb}, MixLoRA, \citep{mixlora}, and HydraLoRA~\citep{hydralora}.
For each baseline method, we perform a hyperparameter search and report the best results.

\textbf{Datasets \& evaluation metrics.}
We finetune the base LLM and evaluate them on a diverse set of benchmarks, including GSM8K~\citep{gsm8k} to evaluate mathematical reasoning capabilities, HumanEval~\citep{humaneval} to evaluate code generation capabilities, and ARC-c~\citep{clark2018think} to evaluate knowledge recall capabilities.
For HumanEval, since HumanEval does not contain a training set, we follow ~\cite{hydralora} to finetune the base LLM on CodeAlpaca~\citep{codealpaca} and report the Pass@1 metric on HumanEval.
For all other datasets, we finetune the base LLM on their training split and report the accuracy metric on their test split. In this work, we use Llama 3 8B~\citep{dubey2024llama} as the base LLM. Besides that, we also report the number of activated parameters (in billion, B) under the best-performing hyperparameters.


\textbf{Implementation details.}
We train all methods using the same number of epochs, learning rate schedule, gradient accumulation steps and machine type. 
All methods are trained using the LLaMA-Factory~\citep{llamafactory} framework and evaluated using the OpenCompass~\citep{opencompass} framework. 
For the no-tuning few-shot method, we use 4 shots for GSM8K and HumanEval and 5 shots for ARC-c.



\subsection{Main Results}

We evaluate the performance of various fine-tuning strategies on three representative tasks: HumanEval (code generation), GSM8K (math reasoning), and ARC-c (knowledge recall). As shown in Table~\ref{tab:exp-main}, our \Ours{} consistently outperforms all baselines across these benchmarks while maintaining strong parameter efficiency.

From a performance standpoint, \Ours{} surpasses all baseline methods, achieving an average accuracy improvement of 2.82 over the strongest competing approach. 
Specifically, \Ours{} outperforms the best Prefix Injection baseline by a substantial 25.88, the best Weight Modulation baseline by 2.82, and the strongest Mixture competitor by 3.34 on average across the three tasks. 
On HumanEval, \Ours{} achieves a Pass@1 of 32.93, outperforming the best baseline, (IA)$^3$, by 1.83. 
For GSM8K, \Ours{} attains an accuracy of 65.66, showing a clear gain of 3.19 over the best competitors (rsLoRA and HydraLoRA). 
On ARC-c, \Ours{} reaches 83.73, exceeding the best-performing low-rank method DoRA by 0.34. 
These results highlight the consistent advantages of our reinforcement-trained router across diverse task types. 
Notably, within the Mixture methods, ReMix provides consistent improvements, suggesting that reinforcement-guided, balance-aware routing enhances both reasoning-intensive tasks (e.g., GSM8K) and generation tasks (e.g., HumanEval), while preserving strong retrieval performance on ARC-c.

In terms of parameter efficiency, \Ours{} achieves these performance gains with a competitive budget of only 0.070B trainable parameters. 
Compared to other mixture methods, this represents a 90\% reduction relative to the most parameter-heavy baseline VB-LoRA (0.675B), and a 31\% reduction compared to the most effective baseline MixLoRA (0.101B). 
Even when compared to the lightweight rsLoRA (0.028B), \Ours{} delivers a +2.82 average-accuracy improvement at the cost of only 0.042B more parameters, demonstrating a superior accuracy-to-parameter trade-off. 
Overall, these results confirm that reinforcement-guided mixture routing achieves state-of-the-art accuracy with minimal and often reduced parameter overhead. 




\begin{wrapfigure}{r}{0.21\linewidth}
\centering
\vspace{-5em}
\includegraphics[width=\linewidth]{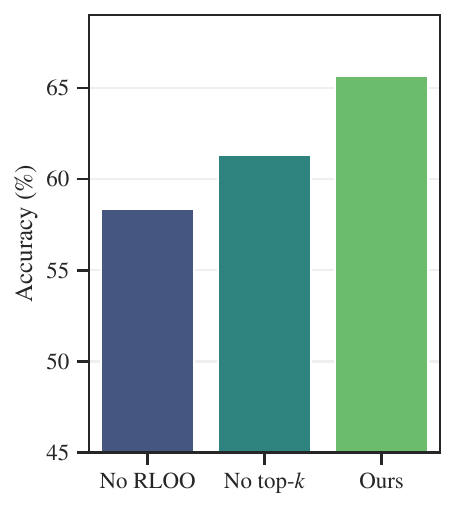}
\vspace{-2em}
\caption{Both of our proposed RLOO and top-$k$ selection contribute significantly to the strong performance of our \Ours{}.}
\label{fig:abla}
\vspace{-2em}
\end{wrapfigure}

\subsection{Ablation Studies}

To understand the contributions of the key components in our proposed \Ours{} (i.e., RLOO for router training and top-$k$ LoRA selection for inference), we conducte ablation studies on GSM8K comparing its performance against the ablated variants with each component removed. The results are presented in Figure~\ref{fig:abla}, which visualizes the accuracy achieved by different configurations.

From Figure~\ref{fig:abla}, we observe that our full \Ours{} method achieves the highest accuracy among all ablated variants. When removing the RLOO from our finetuning procedure \Ours{} (No RLOO), we observe a significant drop in accuracy compared to the full \Ours{}, indicating that RLOO plays a crucial role in enhancing the model's performance. Similarly, disabling the top-$k$ LoRA selection (No top-$k$) also results in lower accuracy than the complete \Ours{}, demonstrating the importance of this component in optimizing the model performance. 
These findings underscore the value of integrating both RLOO and top-$k$ selection into our \Ours{} method.

\begin{table}[t]
\definecolor{C1}{rgb}{0.84, 0.93, 0.98}%
\centering
\caption{The fact that our method significantly outperforms rank-$kr$ LoRA clearly shows that our \Ours{} can activate \textbf{diverse} LoRA subsets: If the activated subset were always the same subset, then the results would be the same as rank-$kr$ LoRA.}\label{tab:kr-vs-k-r}
\vspace{-1em}
\resizebox{0.5\linewidth}{!}{
\begin{tabular}{l|ccc}
\toprule
\textbf{Method}&$k=1$&$k=2$&$k=4$\\
\midrule
Rank-$kr$ LoRA & 56.10&54.51&59.21 \\
\cellcolor{C1}Ours: $k$ rank-$r$ LoRAs &\cellcolor{C1}\textbf{56.18}&\cellcolor{C1}\textbf{59.67}&\cellcolor{C1}\textbf{64.22}\\
\bottomrule
\end{tabular}
}
\vspace{-1em}
\end{table}

\begin{table}[t]
\centering
\caption{Our \Ours{} is not sensitive to routing weight $\omega$.}\label{tab:exp-omega}
\begin{tabular}{l|cc}
\toprule
\textbf{Method}&LoRA-type $\omega$&rsLoRA-type $\omega$\\
\midrule
\Ours{} (Ours) &53.30&55.72\\
\bottomrule
\end{tabular}
\end{table}

\subsection{Diversity of Activated LoRA Subsets}

In this subsection, we empirically verify the diversity of activated LoRA subsets. If the activated subset were always the same subset, then this method would be the same as rank-$kr$ LoRA. Therefore, we compare our method (a mixture of $k$ rank-$r$ LoRAs) with a single rank-$kr$ LoRA, which has the same number of LoRA parameters. The results for $r=8$ on GSM8K under various $k$ are presented in Table~\ref{tab:kr-vs-k-r}.

The fact that our \Ours{} significantly outperforms rank-$kr$ LoRA demonstrates the diversity of selected LoRA subsets. For instance, our accuracy 64.22 for $k=4$ significantly outperforms the accuracy 59.21 of rank-$32$ LoRA. This clearly demonstrates that our method is able to choose different subsets appropriately.

\subsection{Training Efficiency}

In this subsection, we study the training efficiency of our proposed method. Note that MixLoRA can be regarded as an ablated variant where our reinforcement router is replaced with an ordinary learnable router. Hence, we compare our \Ours{} and MixLoRA under comparable training time to show the training efficiency of our proposed \Ours{}. The results are presented in Table~\ref{tab:eff}.

\begin{table}[t]
\definecolor{C1}{rgb}{0.84, 0.93, 0.98}%
\centering
\caption{Our \Ours{} significantly outperforms MixLoRA under similar training time.}\label{tab:eff}
\vspace{-0.5em}
\resizebox{0.6\linewidth}{!}{\begin{tabular}{l|ccc}
\toprule
\textbf{Method}&\textbf{Per-Step Time}&\textbf{Total Time}&\textbf{Accuracy}\\
\midrule
MixLoRA  & 8.95\,s & 1:12:56 & 50.34 \\
\cellcolor{C1}\Ours{} (Ours) &\cellcolor{C1}9.87\,s &\cellcolor{C1}1:28:21&\cellcolor{C1}\textbf{58.38} \\
\bottomrule
\end{tabular}}
\vspace{-1em}
\end{table}

As shown in Table~\ref{tab:eff}, our \Ours{} achieves an accuracy of 58.38\% with a total training time of 1:28:21, while MixLoRA achieves an accuracy of 50.34\% in 1:12:56. Although our \Ours{} consumes only 10\% more training time than MixLoRA, it yields a substantial relative improvement of 15.97\% in accuracy. This demonstrates that our \Ours{} still retains strong performance even under small training compute budget. 



\begin{table}[t]
\centering
\caption{Our \Ours{} consistently improves as we scale up the number $k$ of activated LoRAs.}\label{tab:exp-k}
\vspace{-0.5em}
\resizebox{0.5\linewidth}{!}{
\begin{tabular}{l|cccc}
\toprule
\textbf{Method}&$k=1$&$k=2$&$k=3$&$k=4$\\
\midrule
\Ours{} (Ours) &56.18&59.67&61.33&64.22\\
\bottomrule
\end{tabular}
}
\vspace{-1em}
\end{table}

\subsection{Scaling the Number of Activated LoRAs}

In this subsection, we study how scaling the number $k$ of activated LoRAs benefits the predictive accuracy. Intuitively, the choice of $k$ depends on the tradeoff between efficiency and accuracy. Theoretically, since the number of size-$k$ subsets (i.e., $\binom nk$) increases with $k$ when $k\le n/2$, we would expect accuracy to increase with $k$ correspondingly. To verify this, we conduct experiments with $n=8$ under various $k$ on GSM8K. The results are shown in Table~\ref{tab:exp-k}. Indeed, as shown in the table, Our \Ours{} consistently achieves stronger results under larger $k$ whenever $k\le n/2$.

\begin{wrapfigure}{r}{0.27\linewidth}
\centering
\vspace{-2em}
\includegraphics[width=\linewidth]{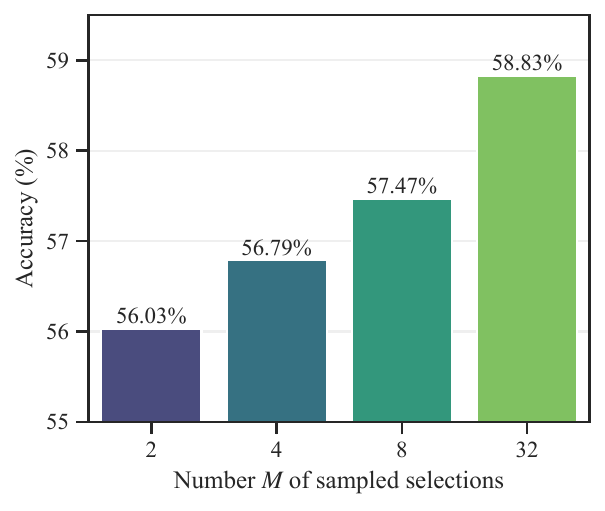}
\vspace{-1em}
\caption{Our proposed \Ours{} can further benefit from scaling up the training compute. In contrast, HydraLoRA and MixLoRA have \textbf{fixed training compute} and thus cannot benefit from scaling up training compute.}
\label{fig:scal-acc}
\vspace{-3em}
\end{wrapfigure}

\subsection{Scaling the Training Compute}

Since our \Ours{} incorporates RL-based gradient estimator, we can effectively scale up training compute by increasing the number $M$ of sampled selections. To evaluate how training compute scaling benefits our \Ours{}, we examine its performance under varying numbers $M$ of sampled selections. As shown in Figure~\ref{fig:scal-acc}, increasing $M$ from 2 to 32 leads to a steady improvement in accuracy, rising from 56.03\% to 58.83\%. This indicates that our \Ours{} effectively leverages additional computational resources to enhance its performance. Notably, the consistent gains observed across different scales suggest that further increases in $M$ could yield even better results. This demonstrates that \Ours{} offers a favorable trade-off between training efficiency and performance. In stark contrast, existing methods do not benefit from training compute scaling because their deterministic training has fixed training compute, which underscores the unique advantage offered by \Ours{} in utilizing increased training compute to achieve improved outcomes.

\subsection{LoRA v.s.\ rsLoRA Routing Weight}\label{ssec:exp-omega}

We compare LoRA-type $\omega:=2/kr$ and rsLoRA-type $\omega:=2/\sqrt{kr}$ under $k=3$ on GSM8K. The results are shown in Table~\ref{tab:exp-omega}. We find that our method \Ours{} is not sensitive to the choice of $\omega$. As shown in Table~\ref{tab:exp-omega}, the performance has only very small difference under these two versions of $\omega$.

\section{Related Work}

Parameter-efficient fine-tuning (PEFT) aims to reduce the number of trainable parameters while achieving strong task performance. Due to the page limit, please refer to Appendix~\ref{app:related} for related work on general PEFT.
More recent efforts in PEFT have explored new multi-LoRA architectures that go beyond single low-rank adapters by explicitly restructuring how multiple LoRA modules are organized and combined, offering advantages on complex data distributions. LoraHub \citep{huang2023lorahub} introduces a dynamic composition framework that integrates multiple LoRAs at the architectural level, enabling cross-task generalization without retraining by assembling adapters into a unified pipeline. MultiLoRA \citep{wang2023multilora} modifies the structural initialization of LoRA subspaces and horizontally expands adapters across layers, thereby mitigating the dominance of top singular vectors and achieving more balanced representations in multi-task learning. HydraLoRA \citep{hydralora} departs from the symmetric LoRA design and proposes an asymmetric architecture that decouples the projection and update pathways, substantially improving parameter and training efficiency. Beyond linear compositions, S’MoRE \citep{smore} integrates LoRA with mixture-of-experts style routing by hierarchically decomposing expert weights into low-rank residual components and routing them through a structured multi-layer architecture. Meanwhile, LoRA-Flow \citep{wang2024loraflow} rethinks the architecture for generative tasks by embedding a lightweight, token-level fusion gate that dynamically modulates multiple LoRAs during inference, and MultLFG \citep{roy2025multlfg} introduces a frequency-aware fusion mechanism that structurally guides LoRA composition across denoising steps.

\section{Conclusion}
In this paper, we investigate the problem of imbalanced routing weights that hinder effective LoRA utilization, and propose a reinforcement-based router design named \Ours{} to address this problem.
Extensive experiments across diverse benchmarks demonstrate that our \Ours{} consistently outperforms state-of-the-art parameter-efficient finetuning methods, achieving superior predictive power and computational efficiency.





\bibliography{output}

@article{roy2025multlfg,
  title={MultLFG: Training-free Multi-LoRA composition using Frequency-domain Guidance},
  author={Roy, Aniket and Suin, Maitreya and Shah, Ketul and Chellappa, Rama},
  journal={arXiv preprint arXiv:2505.20525},
  year={2025}
}

@article{wang2024loraflow,
  title={Lora-flow: Dynamic lora fusion for large language models in generative tasks},
  author={Wang, Hanqing and Ping, Bowen and Wang, Shuo and Han, Xu and Chen, Yun and Liu, Zhiyuan and Sun, Maosong},
  journal={arXiv preprint arXiv:2402.11455},
  year={2024}
}

@article{wang2023multilora,
  title={Multilora: Democratizing lora for better multi-task learning},
  author={Wang, Yiming and Lin, Yu and Zeng, Xiaodong and Zhang, Guannan},
  journal={arXiv preprint arXiv:2311.11501},
  year={2023}
}

@article{huang2023lorahub,
  title={Lorahub: Efficient cross-task generalization via dynamic lora composition},
  author={Huang, Chengsong and Liu, Qian and Lin, Bill Yuchen and Pang, Tianyu and Du, Chao and Lin, Min},
  journal={arXiv preprint arXiv:2307.13269},
  year={2023}
}

@article{yang2023bayesian,
  title={Bayesian low-rank adaptation for large language models},
  author={Yang, Adam X and Robeyns, Maxime and Wang, Xi and Aitchison, Laurence},
  journal={arXiv preprint arXiv:2308.13111},
  year={2023}
}

@article{zhang2023adalora,
  title={Adalora: Adaptive budget allocation for parameter-efficient fine-tuning},
  author={Zhang, Qingru and Chen, Minshuo and Bukharin, Alexander and Karampatziakis, Nikos and He, Pengcheng and Cheng, Yu and Chen, Weizhu and Zhao, Tuo},
  journal={arXiv preprint arXiv:2303.10512},
  year={2023}
}

@article{valipour2022dylora,
  title={Dylora: Parameter efficient tuning of pre-trained models using dynamic search-free low-rank adaptation},
  author={Valipour, Mojtaba and Rezagholizadeh, Mehdi and Kobyzev, Ivan and Ghodsi, Ali},
  journal={arXiv preprint arXiv:2210.07558},
  year={2022}
}

@article{hu2022lora,
  title={Lora: Low-rank adaptation of large language models.},
  author={Hu, Edward J and Shen, Yelong and Wallis, Phillip and Allen-Zhu, Zeyuan and Li, Yuanzhi and Wang, Shean and Wang, Lu and Chen, Weizhu and others},
  journal={ICLR},
  volume={1},
  number={2},
  pages={3},
  year={2022}
}

@article{ruckle2020adapterdrop,
  title={Adapterdrop: On the efficiency of adapters in transformers},
  author={R{\"u}ckl{\'e}, Andreas and Geigle, Gregor and Glockner, Max and Beck, Tilman and Pfeiffer, Jonas and Reimers, Nils and Gurevych, Iryna},
  journal={arXiv preprint arXiv:2010.11918},
  year={2020}
}

@article{he2022sparseadapter,
  title={Sparseadapter: An easy approach for improving the parameter-efficiency of adapters},
  author={He, Shwai and Ding, Liang and Dong, Daize and Zhang, Miao and Tao, Dacheng},
  journal={arXiv preprint arXiv:2210.04284},
  year={2022}
}

@inproceedings{jie2023revisiting,
  title={Revisiting the parameter efficiency of adapters from the perspective of precision redundancy},
  author={Jie, Shibo and Wang, Haoqing and Deng, Zhi-Hong},
  booktitle={Proceedings of the IEEE/CVf international conference on computer vision},
  pages={17217--17226},
  year={2023}
}

@article{li2021prefix,
  title={Prefix-tuning: Optimizing continuous prompts for generation},
  author={Li, Xiang Lisa and Liang, Percy},
  journal={arXiv preprint arXiv:2101.00190},
  year={2021}
}

@article{le2024revisiting,
  title={Revisiting prefix-tuning: Statistical benefits of reparameterization among prompts},
  author={Le, Minh and Nguyen, Chau and Nguyen, Huy and Tran, Quyen and Le, Trung and Ho, Nhat},
  journal={arXiv preprint arXiv:2410.02200},
  year={2024}
}

@article{chen2022inducer,
  title={Inducer-tuning: Connecting Prefix-tuning and Adapter-tuning},
  author={Chen, Yifan and Hazarika, Devamanyu and Namazifar, Mahdi and Liu, Yang and Jin, Di and Hakkani-Tur, Dilek},
  journal={arXiv preprint arXiv:2210.14469},
  year={2022}
}

@article{petrov2023prompting,
  title={When do prompting and prefix-tuning work? a theory of capabilities and limitations},
  author={Petrov, Aleksandar and Torr, Philip HS and Bibi, Adel},
  journal={arXiv preprint arXiv:2310.19698},
  year={2023}
}

@article{liu2021p,
  title={P-tuning v2: Prompt tuning can be comparable to fine-tuning universally across scales and tasks},
  author={Liu, Xiao and Ji, Kaixuan and Fu, Yicheng and Tam, Weng Lam and Du, Zhengxiao and Yang, Zhilin and Tang, Jie},
  journal={arXiv preprint arXiv:2110.07602},
  year={2021}
}

@article{shi2023dept,
  title={Dept: Decomposed prompt tuning for parameter-efficient fine-tuning},
  author={Shi, Zhengxiang and Lipani, Aldo},
  journal={arXiv preprint arXiv:2309.05173},
  year={2023}
}

@article{lester2021power,
  title={The power of scale for parameter-efficient prompt tuning},
  author={Lester, Brian and Al-Rfou, Rami and Constant, Noah},
  journal={arXiv preprint arXiv:2104.08691},
  year={2021}
}

@article{zang2022unified,
  title={Unified vision and language prompt learning},
  author={Zang, Yuhang and Li, Wei and Zhou, Kaiyang and Huang, Chen and Loy, Chen Change},
  journal={arXiv preprint arXiv:2210.07225},
  year={2022}
}

@inproceedings{wang2022no,
  title={No more fine-tuning? an experimental evaluation of prompt tuning in code intelligence},
  author={Wang, Chaozheng and Yang, Yuanhang and Gao, Cuiyun and Peng, Yun and Zhang, Hongyu and Lyu, Michael R},
  booktitle={Proceedings of the 30th ACM joint European software engineering conference and symposium on the foundations of software engineering},
  pages={382--394},
  year={2022}
}

@inproceedings{kool2019buy,
  title={Buy 4 {REINFORCE} samples, get a baseline for free!},
  author={Kool, Wouter and van Hoof, Herke and Welling, Max},
  booktitle={ICLR 2019 workshop: Deep RL Meets Structured Prediction},
  year={2019}
}

@article{dubey2024llama,
  title={The llama 3 herd of models},
  author={Dubey, Abhimanyu and Jauhri, Abhinav and Pandey, Abhinav and Kadian, Abhishek and Al-Dahle, Ahmad and Letman, Aiesha and Mathur, Akhil and Schelten, Alan and Yang, Amy and Fan, Angela and others},
  journal={arXiv preprint arXiv:2407.21783},
  year={2024}
}

@article{hu2021lora,
  title={Lora: Low-rank adaptation of large language models},
  author={Hu, Edward J and Shen, Yelong and Wallis, Phillip and Allen-Zhu, Zeyuan and Li, Yuanzhi and Wang, Shean and Wang, Lu and Chen, Weizhu},
  journal={arXiv preprint arXiv:2106.09685},
  year={2021}
}

@inproceedings{chen2026influence,
title={Influence-preserving proxies for gradient-based data selection in {LLM} finetuning},
author={Chen, Sirui and Qi, Yunzhe and Ai, Mengting and Sun, Yifan and Qiu, Ruizhong and Zou, Jiaru and He, Jingrui},
booktitle={The Fourteenth International Conference on Learning Representations},
year={2026},
}

@inproceedings{yu2026planetalign,
title={{PlanetAlign:} A comprehensive {Python} library for benchmarking network alignment},
author={Yu, Qi and Zeng, Zhichen and Yan, Yuchen and Liu, Zhining and Jing, Baoyu and Qiu, Ruizhong and Azad, Ariful and Tong, Hanghang},
booktitle={The Fourteenth International Conference on Learning Representations},
year={2026},
}

@inproceedings{li2025graph,
title={Graph data selection for domain adaptation: {A} model-free approach},
author={Li, Ting-Wei and Qiu, Ruizhong and Tong, Hanghang},
booktitle={Advances in Neural Information Processing Systems 38},
year={2025},
}

@inproceedings{zou2025transformer,
title={Transformer copilot: {Learning} from the mistake log in {LLM} fine-tuning},
author={Zou, Jiaru and Ban, Yikun and Li, Zihao and Qi, Yunzhe and Qiu, Ruizhong and Yang, Ling and He, Jingrui},
booktitle={Advances in Neural Information Processing Systems 38},
year={2025},
}

@inproceedings{xu2024discrete,
title={Discrete-state continuous-time diffusion for graph generation},
author={Xu, Zhe and Qiu, Ruizhong and Chen, Yuzhong and Chen, Huiyuan and Fan, Xiran and Pan, Menghai and Zeng, Zhichen and Das, Mahashweta and Tong, Hanghang},
booktitle={Advances in Neural Information Processing Systems 37},
year={2024},
}

@inproceedings{lin2024backtime,
title={{BackTime:} Backdoor attacks on multivariate time series forecasting},
author={Lin, Xiao and Liu, Zhining and Fu, Dongqi and Qiu, Ruizhong and Tong, Hanghang},
booktitle={Advances in Neural Information Processing Systems 37},
year={2024},
}

@inproceedings{liu2025breaking,
title={Breaking silos: {Adaptive} model fusion unlocks better time series forecasting},
author={Liu, Zhining and Yang, Ze and Lin, Xiao and Qiu, Ruizhong and Wei, Tianxin and Zhu, Yada and Hamann, Hendrik and He, Jingrui and Tong, Hanghang},
booktitle={Proceedings of the 42nd International Conference on Machine Learning},
year={2025},
}

@inproceedings{zeng2024graph,
title={Graph mixup on approximate {Gromov–Wasserstein} geodesics},
author={Zeng, Zhichen and Qiu, Ruizhong and Xu, Zhe and Liu, Zhining and Yan, Yuchen and Wei, Tianxin and Ying, Lei and He, Jingrui and Tong, Hanghang},
booktitle={Proceedings of the 41st International Conference on Machine Learning},
year={2024},
}

@inproceedings{liu2024class,
title={Class-imbalanced graph learning without class rebalancing},
author={Liu, Zhining and Qiu, Ruizhong and Zeng, Zhichen and Yoo, Hyunsik and Zhou, David and Xu, Zhe and Zhu, Yada and Weldemariam, Kommy and He, Jingrui and Tong, Hanghang},
booktitle={Proceedings of the 41st International Conference on Machine Learning},
year={2024},
}

@inproceedings{bao2025latte,
title={Latte: Collaborative test-time adaptation of vision-language models in federated learning},
author={Bao, Wenxuan and Deng, Ruxi and Qiu, Ruizhong and Wei, Tianxin and Tong, Hanghang and He, Jingrui},
booktitle={Proceedings of the IEEE/CVF International Conference on Computer Vision},
year={2025},
}

@article{zeng2026harnessing,
title={Harnessing consistency for robust test-time {LLM} ensemble},
author={Zeng, Zhichen and Yu, Qi and Lin, Xiao and Qiu, Ruizhong and Ning, Xuying and Wei, Tianxin and Yan, Yuchen and He, Jingrui and Tong, Hanghang},
journal={Findings of EACL 2026},
year={2026},
}

@inproceedings{he2026powergrow,
title={{PowerGrow:} Feasible co-growth of structures and dynamics for power grid synthesis},
author={He, Xinyu and Xiao, Chenhan and Li, Haoran and Qiu, Ruizhong and Xu, Zhe and Weng, Yang and He, Jingrui and Tong, Hanghang},
booktitle={Proceedings of the 32nd ACM SIGKDD Conference on Knowledge Discovery and Data Mining},
year={2026},
}

@inproceedings{yoo2025generalizable,
title={Generalizable recommender system during temporal popularity distribution shifts},
author={Yoo, Hyunsik and Qiu, Ruizhong and Xu, Charlie and Wang, Fei and Tong, Hanghang},
booktitle={Proceedings of the 31st ACM SIGKDD Conference on Knowledge Discovery and Data Mining},
year={2025},
}

@inproceedings{liu2024aim,
title={{AIM:} Attributing, interpreting, mitigating data-encoded unfairness},
author={Liu, Zhining and Qiu, Ruizhong and Zeng, Zhichen and Zhu, Yada and Hamann, Hendrik and Tong, Hanghang},
booktitle={Proceedings of the 30th ACM SIGKDD Conference on Knowledge Discovery and Data Mining},
year={2024},
}

@inproceedings{wang2023networked,
title={Networked time series imputation via position-aware graph enhanced variational autoencoders},
author={Wang, Dingsu and Yan, Yuchen and Qiu, Ruizhong and Zhu, Yada and Guan, Kaiyu and Margenot, Andrew J. and Tong, Hanghang},
booktitle={Proceedings of the 29th ACM SIGKDD Conference on Knowledge Discovery and Data Mining},
year={2023},
}

@inproceedings{yoo2025embracing,
title={Embracing plasticity: {Balancing} stability and plasticity in continual recommender systems},
author={Yoo, Hyunsik and Kang, SeongKu and Qiu, Ruizhong and Xu, Charlie and Wang, Fei and Tong, Hanghang},
booktitle={Proceedings of the 48th International ACM SIGIR Conference on Research and Development in Information Retrieval},
year={2025},
}

@inproceedings{zhang2026guiding,
title={Guiding generative recommender systems with structured human priors via multi-head decoding},
author={Zhang, Yunkai and Zhang, Qiang and Yang, Diji and Lin, Ryan and Qiu, Ruizhong and Zhang, Benyu and Yu, Hanchao and Liu, Jason and Xia, Yinglong and Zhao, Zhuokai and Zhang, Lizhu and Fan, Xiangjun and Yu, Zhuoran and Kumar, Abhishek and Zheng, Zeyu},
booktitle={Proceedings of the ACM Web Conference 2026},
year={2026},
}

@inproceedings{lin2026mixture,
title={Mixture of sequence: {Theme-aware} mixture-of-experts for long-sequence recommendation},
author={Lin, Xiao and Tang, Zhicheng and Cong, Weilin and Hang, Mengyue and Wang, Kai and Wang, Yajuan and Zeng, Zhichen and Li, Ting-Wei and Yoo, Hyunsik and Liu, Zhining and Ning, Xuying and Qiu, Ruizhong and Chen, Wen-Yen and Chang, Shuo and Jin, Rong and Li, Huayu and Tong, Hanghang},
booktitle={Proceedings of the ACM Web Conference 2026},
year={2026},
}

@inproceedings{yoo2024ensuring,
title={Ensuring user-side fairness in dynamic recommender systems},
author={Yoo, Hyunsik and Zeng, Zhichen and Kang, Jian and Qiu, Ruizhong and Zhou, David and Liu, Zhining and Wang, Fei and Xu, Charlie and Chan, Eunice and Tong, Hanghang},
booktitle={Proceedings of the ACM Web Conference 2024},
year={2024},
}

@inproceedings{he2024on,
title={On the sensitivity of individual fairness: {Measures} and robust algorithms},
author={He, Xinyu and Kang, Jian and Qiu, Ruizhong and Wang, Fei and Sepulveda, Jose and Tong, Hanghang},
booktitle={Proceedings of the 33rd ACM International Conference on Information and Knowledge Management},
year={2024},
}

@article{zeng2026pave,
title={Pave your own path: {Graph} gradual domain adaptation on fused {Gromov–Wasserstein} geodesics},
author={Zeng, Zhichen and Qiu, Ruizhong and Bao, Wenxuan and Wei, Tianxin and Lin, Xiao and Yan, Yuchen and Abdelzaher, Tarek F. and Han, Jiawei and Tong, Hanghang},
journal={Transactions on Machine Learning Research},
year={2026},
}

@article{wei2026diffkgw,
title={{DiffKGW:} Stealthy and robust diffusion model watermarking},
author={Wei, Tianxin and Qiu, Ruizhong and Chen, Yifan and Qi, Yunzhe and Lin, Jiacheng and Bao, Wenxuan and Xu, Wenju and Nag, Sreyashi and Li, Ruirui and Lu, Hanqing and Wang, Zhengyang and Luo, Chen and Liu, Hui and Wang, Suhang and He, Jingrui and He, Qi and Tang, Xianfeng},
journal={Transactions on Machine Learning Research},
year={2026},
}

@article{li2026beyond,
title={Beyond log likelihood: {Probability-based} objectives for supervised fine-tuning across the model capability continuum},
author={Li, Gaotang and Qiu, Ruizhong and Chen, Xiusi and Ji, Heng and Tong, Hanghang},
journal={ICLR 2026 Workshop on Scaling Post-training for LLMs},
year={2026},
}

@article{bartan2025fineamp,
title={{FineAMP:} Optimization-based automatic mixed precision quantization for efficient diffusion model inference},
author={Bartan, Burak and Qiu, Ruizhong and Esteves, Rafael and Ren, Yuwei and Zeng, Weiliang Will and Chen, An},
journal={The 17th International OPT Workshop on Optimization for Machine Learning},
year={2025},
}

@article{li2025haystack,
title={Haystack engineering: {Context} engineering meets the long-context challenge in large language models},
author={Li, Mufei and Fu, Dongqi and Wang, Limei and Zhang, Si and Zeng, Hanqing and Sancak, Kaan and Qiu, Ruizhong and Wang, Haoyu Peter and He, Xiaoxin and Bresson, Xavier and Xia, Yinglong and Sun, Chonglin and Li, Pan},
journal={NeurIPS 2025 Workshop on Evaluating the Evolving LLM Lifecycle: Benchmarks, Emergent Abilities, and Scaling},
year={2025},
}

@article{wei2026agentic,
title={Agentic reasoning for large language models: {A} survey},
author={Wei, Tianxin and Li, Ting-Wei and Liu, Zhining and Ning, Xuying and Yang, Ze and Zou, Jiaru and Zeng, Zhichen and Qiu, Ruizhong and Lin, Xiao and Fu, Dongqi and Li, Zihao and Ai, Mengting and Zhou, Duo and Bao, Wenxuan and Li, Yunzhe and Li, Gaotang and Qian, Cheng and Wang, Yu and Tang, Xiangru and Xiao, Yin and Fang, Liri and Liu, Hui and Tang, Xianfeng and Zhang, Yuji and Wang, Chi and You, Jiaxuan and Ji, Heng and Tong, Hanghang and He, Jingrui},
journal={arXiv preprint},
year={2026},
}

@article{zeng2026subspace,
title={Subspace alignment for vision-language model test-time adaptation},
author={Zeng, Zhichen and Bao, Wenxuan and Lin, Xiao and Qiu, Ruizhong and Wei, Tianxin and Ning, Xuying and Yan, Yuchen and Luo, Chen and Cheng, Monica Xiao and He, Jingrui and Tong, Hanghang},
journal={arXiv preprint},
year={2026},
}

@article{cui2026adafuse,
title={{AdaFuse:} Adaptive ensemble decoding with test-time scaling for {LLMs}},
author={Cui, Chengming and Wei, Tianxin and Chen, Ziyi and Qiu, Ruizhong and Zeng, Zhichen and Liu, Zhining and Ning, Xuying and Zhou, Duo and He, Jingrui},
journal={arXiv preprint},
year={2026},
}

@article{wei2025cofirec,
title={{CoFiRec:} Coarse-to-fine tokenization for generative recommendation},
author={Wei, Tianxin and Ning, Xuying and Chen, Xuxing and Qiu, Ruizhong and Hou, Yupeng and Xie, Yan and Yang, Shuang and Hua, Zhigang and He, Jingrui},
journal={arXiv preprint},
year={2025},
}

@article{zou2025latent,
title={Latent collaboration in multi-agent systems},
author={Zou, Jiaru and Yang, Xiyuan and Qiu, Ruizhong and Li, Gaotang and Tieu, Katherine and Lu, Pan and Shen, Ke and Tong, Hanghang and Choi, Yejin and He, Jingrui and Zou, James and Wang, Mengdi and Yang, Ling},
journal={arXiv preprint},
year={2025},
}

@article{zeng2025hierarchical,
title={Hierarchical {LoRA} {MoE} for efficient {CTR} model scaling},
author={Zeng, Zhichen and Hang, Mengyue and Liu, Xiaolong and Liu, Xiaoyi and Lin, Xiao and Qiu, Ruizhong and Wei, Tianxin and Liu, Zhining and Yuan, Siyang and Yang, Chaofei and Liu, Yiqun and Yin, Hang and Yang, Jiyan and Tong, Hanghang},
journal={arXiv preprint},
year={2025},
}

@inproceedings{hydralora,
  title={HydraLoRA: An Asymmetric LoRA Architecture for Efficient Fine-Tuning},
  author={Tian, Chunlin and Shi, Zhan and Guo, Zhijiang and Li, Li and Xu, Chengzhong},
  booktitle={Advances in Neural Information Processing Systems (NeurIPS)},
  year={2024}
}

@article{mixlora,
  title={Mixlora: Enhancing large language models fine-tuning with lora based mixture of experts},
  author={Li, Dengchun and Ma, Yingzi and Wang, Naizheng and Cheng, Zhiyuan and Duan, Lei and Zuo, Jie and Yang, Cal and Tang, Mingjie},
  journal={arXiv preprint arXiv:2404.15159},
  year={2024}
}

@article{clark2018think,
  title={Think you have solved question answering? try arc, the ai2 reasoning challenge},
  author={Clark, Peter and Cowhey, Isaac and Etzioni, Oren and Khot, Tushar and Sabharwal, Ashish and Schoenick, Carissa and Tafjord, Oyvind},
  journal={arXiv preprint arXiv:1803.05457},
  year={2018}
}

@inproceedings{llamafactory,
  title={LlamaFactory: Unified Efficient Fine-Tuning of 100+ Language Models},
  author={Yaowei Zheng and Richong Zhang and Junhao Zhang and Yanhan Ye and Zheyan Luo and Zhangchi Feng and Yongqiang Ma},
  booktitle={Proceedings of the 62nd Annual Meeting of the Association for Computational Linguistics (Volume 3: System Demonstrations)},
  address={Bangkok, Thailand},
  publisher={Association for Computational Linguistics},
  year={2024},
  url={http://arxiv.org/abs/2403.13372}
}

@misc{opencompass,
    title={OpenCompass: A Universal Evaluation Platform for Foundation Models},
    author={OpenCompass Contributors},
    howpublished = {\url{https://github.com/open-compass/opencompass}},
    year={2023}
}

@article{gsm8k,
  author       = {Karl Cobbe and
                  Vineet Kosaraju and
                  Mohammad Bavarian and
                  Mark Chen and
                  Heewoo Jun and
                  Lukasz Kaiser and
                  Matthias Plappert and
                  Jerry Tworek and
                  Jacob Hilton and
                  Reiichiro Nakano and
                  Christopher Hesse and
                  John Schulman},
  title        = {Training Verifiers to Solve Math Word Problems},
  journal      = {CoRR},
  volume       = {abs/2110.14168},
  year         = {2021},
  url          = {https://arxiv.org/abs/2110.14168},
  eprinttype    = {arXiv},
  eprint       = {2110.14168},
  bibsource    = {dblp computer science bibliography, https://dblp.org}
}

@article{humaneval,
  author       = {Mark Chen and
                  Jerry Tworek and
                  Heewoo Jun and
                  Qiming Yuan and
                  Henrique Pond{\'{e}} de Oliveira Pinto and
                  Jared Kaplan and
                  Harri Edwards and
                  Yuri Burda and
                  Nicholas Joseph and
                  Greg Brockman and
                  Alex Ray and
                  Raul Puri and
                  Gretchen Krueger and
                  Michael Petrov and
                  Heidy Khlaaf and
                  Girish Sastry and
                  Pamela Mishkin and
                  Brooke Chan and
                  Scott Gray and
                  Nick Ryder and
                  Mikhail Pavlov and
                  Alethea Power and
                  Lukasz Kaiser and
                  Mohammad Bavarian and
                  Clemens Winter and
                  Philippe Tillet and
                  Felipe Petroski Such and
                  Dave Cummings and
                  Matthias Plappert and
                  Fotios Chantzis and
                  Elizabeth Barnes and
                  Ariel Herbert{-}Voss and
                  William Hebgen Guss and
                  Alex Nichol and
                  Alex Paino and
                  Nikolas Tezak and
                  Jie Tang and
                  Igor Babuschkin and
                  Suchir Balaji and
                  Shantanu Jain and
                  William Saunders and
                  Christopher Hesse and
                  Andrew N. Carr and
                  Jan Leike and
                  Joshua Achiam and
                  Vedant Misra and
                  Evan Morikawa and
                  Alec Radford and
                  Matthew Knight and
                  Miles Brundage and
                  Mira Murati and
                  Katie Mayer and
                  Peter Welinder and
                  Bob McGrew and
                  Dario Amodei and
                  Sam McCandlish and
                  Ilya Sutskever and
                  Wojciech Zaremba},
  title        = {Evaluating Large Language Models Trained on Code},
  journal      = {CoRR},
  volume       = {abs/2107.03374},
  year         = {2021},
  url          = {https://arxiv.org/abs/2107.03374},
  eprinttype    = {arXiv},
  eprint       = {2107.03374},
  bibsource    = {dblp computer science bibliography, https://dblp.org}
}

@misc{codealpaca,
  author = {Sahil Chaudhary},
  title = {Code Alpaca: An Instruction-following LLaMA model for code generation},
  year = {2023},
  publisher = {GitHub},
  journal = {GitHub repository},
  howpublished = {\url{https://github.com/sahil280114/codealpaca}},
}

@article{smore,
  title={{S'MoRE}: Structural Mixture of Residual Experts for LLM Fine-tuning},
  author={Zeng, Hanqing and Xia, Yinglong and Zhao, Zhuokai and Jiang, Gilbert and Zhang, Qiang and Liu, Jiayi and Zhang, Lizhu and Fan, Xiangjun and Zhang, Benyu},
  journal={arXiv preprint arXiv:2504.06426},
  year={2025}
}

@inproceedings{he2015delving,
  title={Delving deep into rectifiers: Surpassing human-level performance on imagenet classification},
  author={He, Kaiming and Zhang, Xiangyu and Ren, Shaoqing and Sun, Jian},
  booktitle={Proceedings of the IEEE international conference on computer vision},
  pages={1026--1034},
  year={2015}
}

@article{grendar2006entropy,
  title={Entropy and effective support size},
  author={Grendar, Marian},
  journal={Entropy},
  volume={8},
  number={3},
  pages={169--174},
  year={2006}
}

@article{birnbaum1942inequality,
  title={An inequality for {Mill}'s ratio},
  author={Birnbaum, Zygmunt Wilhelm},
  journal={The Annals of Mathematical Statistics},
  volume={13},
  number={2},
  pages={245--246},
  year={1942},
  publisher={Institute of Mathematical Statistics}
}

@article{gordon1941values,
  title={Values of {Mills}' ratio of area to bounding ordinate and of the normal probability integral for large values of the argument},
  author={Gordon, Robert D.},
  journal={The Annals of Mathematical Statistics},
  volume={12},
  number={3},
  pages={364--366},
  year={1941},
  publisher={JSTOR}
}

@article{liu2021gpt,
  title={GPT Understands, Too},
  author={Liu, Xiao and Zheng, Yanan and Du, Zhengxiao and Ding, Ming and Qian, Yujie and Yang, Zhilin and Tang, Jie},
  journal={arXiv preprint arXiv:2103.10385},
  year={2021}
}

@article{liu2022few,
  title={Few-shot parameter-efficient fine-tuning is better and cheaper than in-context learning},
  author={Liu, Haokun and Tam, Derek and Muqeeth, Mohammed and Mohta, Jay and Huang, Tenghao and Bansal, Mohit and Raffel, Colin A},
  journal={Advances in Neural Information Processing Systems},
  volume={35},
  pages={1950--1965},
  year={2022}
}

@inproceedings{liu2024dora,
  title={{DoRA}: Weight-decomposed low-rank adaptation},
  author={Liu, Shih-Yang and Wang, Chien-Yi and Yin, Hongxu and Molchanov, Pavlo and Wang, Yu-Chiang Frank and Cheng, Kwang-Ting and Chen, Min-Hung},
  booktitle={Forty-first International Conference on Machine Learning},
  year={2024}
}

@article{kalajdzievski2023rank,
  title={A rank stabilization scaling factor for fine-tuning with {LoRA}},
  author={Kalajdzievski, Damjan},
  journal={arXiv preprint arXiv:2312.03732},
  year={2023}
}

@article{li2024vb,
  title={{VB-LoRA}: Extreme parameter efficient fine-tuning with vector banks},
  author={Li, Yang and Han, Shaobo and Ji, Shihao},
  journal={Advances in Neural Information Processing Systems},
  volume={37},
  pages={16724--16751},
  year={2024}
}
\bibliographystyle{iclr2026_conference}

\newpage
\appendix

\addtocontents{toc}{\protect\setcounter{tocdepth}{2}}
\tableofcontents

\section{Related Work (Cont'd)}\label{app:related}
While neural networks has been prevalent in various domains \citep{cui2026adafuse,he2026powergrow,yu2026planetalign,he2024on,yoo2025embracing,yoo2025generalizable,yoo2024ensuring,wang2023networked}, Transformers have nowadays become the \emph{de facto} neural architecture \citep{wei2026agentic,wei2025cofirec,wei2026diffkgw,chen2026influence,li2026beyond,li2025haystack,li2025graph,zeng2026subspace,zeng2026pave,zeng2026harnessing,zeng2025hierarchical,zeng2024graph,zhang2026guiding,lin2026mixture,lin2024backtime,bartan2025fineamp,zou2025transformer,zou2025latent,liu2025breaking,liu2024class,liu2024aim,bao2025latte,xu2024discrete}. PEFT methods for Transformers can be broadly categorized into four groups: prompt tuning, prefix tuning, adapter-based methods, and low-rank adaptation methods. Early methods such as prompt tuning \citep{liu2021p, shi2023dept, lester2021power, zang2022unified, wang2022no} and prefix tuning \citep{li2021prefix, le2024revisiting, chen2022inducer, petrov2023prompting} introduce small continuous prompts, but often struggle to scale to deeper layers or larger models due to limited expressivity. Adapter-based methods \citep{he2022sparseadapter, ruckle2020adapterdrop, jie2023revisiting} mitigate some of these issues by inserting lightweight bottleneck modules into transformer layers. However, as the depth and dimensionality of models increase, the parameter overhead of adapters can become substantial, creating significant bottlenecks in computation and scalability. To address these limitations, low-rank adaptation methods \citep{hu2021lora, valipour2022dylora, zhang2023adalora, yang2023bayesian} are proposed. These methods inject rank-constrained updates into weight matrices, striking a favorable balance between expressivity and parameter cost, and have become a \emph{de facto} standard for many adaptation tasks. Specifically, LoRA \citep{hu2022lora} introduces two trainable low-rank matrices while keeping the original model weights frozen. By training these matrices to approximate parameter perturbations, LoRA achieves effective fine-tuning with minimal overhead. Building on this idea, DyLoRA \citep{valipour2022dylora} dynamically trains LoRA modules across a range of ranks within a predefined budget rather than fixing the rank. AdaLoRA \citep{zhang2023adalora} reformulates parameter perturbations using singular value decomposition (SVD), fine-tuning across the three SVD components for improved flexibility. Laplace-LoRA \citep{yang2023bayesian} takes a Bayesian perspective, applying a post-hoc Laplace approximation to the posterior distribution over LoRA parameters, thereby offering a principled uncertainty-aware extension.

\section{Theoretical Proofs}\label{app:theo}

\subsection{Proof of Theorem~\ref{THM:imb}}\label{app:theo-imb}

Before stating our proof of Theorem~\ref{THM:imb}, we present a few technical lemmata that we will employ.

Let $\varphi(z):=\frac1{\sqrt{2\uppi}}\RM e^{-x^2/2}$, $\varPhi(z):=\int_{-\infty}^z\varphi(x)\DD x$, and $\BAR\varPhi(z):=1-\varPhi(z)$ ($z\in\BB R$) denote the probability density function, the cumulative distribution function, and the complementary cumulative distribution function of the standard Gaussian distribution $\CAL N(0,1)$, respectively.

\begin{LEM}[a Gaussian gap estimate]\label{LEM:0}
For every $z\in\BB R$ and $\alpha>0$,
\AL{\varPhi(z+\alpha)-\varPhi(z)\le\frac{\alpha}{\sqrt{2\uppi}}.}
\end{LEM}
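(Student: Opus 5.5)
We will prove Lemma~\ref{LEM:0} by writing the difference of the cumulative distribution function as an integral of the density and bounding the density by its maximum. By definition of $\varPhi$ and the fundamental theorem of calculus,
\AL{\varPhi(z+\alpha)-\varPhi(z)=\int_z^{z+\alpha}\varphi(x)\DD x.}
Since $\alpha>0$, the integration interval $[z,z+\alpha]$ has length exactly $\alpha$, and the integrand is nonnegative. Hence it suffices to bound $\varphi$ uniformly on $\BB R$.

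For the uniform bound, note that $x^2/2\ge0$ for every real $x$, so $\RM e^{-x^2/2}\le1$. Therefore $\varphi(x)\le\frac1{\sqrt{2\uppi}}$ for all $x\in\BB R$, with equality only at $x=0$. Substituting this into the integral gives
\AL{\int_z^{z+\alpha}\varphi(x)\DD x\le\int_z^{z+\alpha}\frac{1}{\sqrt{2\uppi}}\DD x=\frac{\alpha}{\sqrt{2\uppi}},}
which is exactly the claimed inequality. Equivalently, one can view this as the mean value theorem applied to $\varPhi$: there exists $\xi\in(z,z+\alpha)$ with $\varPhi(z+\alpha)-\varPhi(z)=\alpha\varphi(\xi)\le\alpha/\sqrt{2\uppi}$.

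There is no real obstacle here. The only point requiring care is a typographical one: the definition of $\varphi$ in the text is written with the variable $x$ in the exponent but $z$ as the argument. We will read it as $\varphi(x)=\frac1{\sqrt{2\uppi}}\RM e^{-x^2/2}$, i.e., the standard Gaussian density. Under that reading, the maximum value $1/\sqrt{2\uppi}$ is immediate.
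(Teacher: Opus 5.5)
Your proof is correct and is essentially the paper's argument: write $\varPhi(z+\alpha)-\varPhi(z)$ as the integral of the density over an interval of length $\alpha$, then bound the density pointwise by its maximum $1/\sqrt{2\pi}$. Your reading of the typo in the definition of $\varphi$ is also the intended one.
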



\begin{proof}
Since $\varPhi'(t)=\varphi(t)=\frac{\RM e^{-t^2/2}}{\sqrt{2\uppi}}$, then
\AL{
&\varPhi(z+\alpha)-\varPhi(z)=\int_z^{z+\alpha}\varPhi'(t)\DD t=\int_z^{z+\alpha}\frac{\RM e^{-t^2/2}}{\sqrt{2\uppi}}\DD t\le\int_z^{z+\alpha}\frac1{\sqrt{2\uppi}}\DD t=\frac{\alpha}{\sqrt{2\uppi}}
.\qedhere}
\end{proof}

\begin{LEM}[a Gaussian upper-tail gap estimate]\label{LEM:1}
For any $z\ge0$ and any $\alpha>0$,
\AL{\varPhi(z+\alpha)-\varPhi(z)\le\sqrt{2\uppi}\,(\varPhi(\alpha)-\varPhi(0))\varphi(z)\le \alpha\,\varphi(z).}
\end{LEM}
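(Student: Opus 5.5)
Lemma~\ref{LEM:1} compares the Gaussian mass of $[z,z+\alpha]$ with that of $[0,\alpha]$, scaled by $\varphi(z)$. We write
\AL{\varPhi(z+\alpha)-\varPhi(z)=\int_0^\alpha\varphi(z+t)\DD t=\varphi(z)\int_0^\alpha \RM e^{-zt-t^2/2}\DD t .}
This uses $\varphi(z+t)=\varphi(z)\,\RM e^{-zt}\,\RM e^{-t^2/2}$, which follows from expanding $(z+t)^2=z^2+2zt+t^2$.

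For the first inequality, note that $z\ge0$ and $t\in[0,\alpha]$ give $zt\ge0$, so $\RM e^{-zt}\le1$. The integral is then at most $\int_0^\alpha \RM e^{-t^2/2}\DD t=\sqrt{2\uppi}\int_0^\alpha\varphi(t)\DD t=\sqrt{2\uppi}\,(\varPhi(\alpha)-\varPhi(0))$, which gives
\AL{\varPhi(z+\alpha)-\varPhi(z)\le\sqrt{2\uppi}\,(\varPhi(\alpha)-\varPhi(0))\,\varphi(z).}

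For the second inequality, we need $\sqrt{2\uppi}\,(\varPhi(\alpha)-\varPhi(0))\le\alpha$. This is exactly Lemma~\ref{LEM:0} with $z=0$, multiplied by $\sqrt{2\uppi}$. Equivalently, one can bound $\RM e^{-t^2/2}\le1$ directly inside the integral.

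The whole argument reduces to the factorization of $\varphi(z+t)$ and the sign condition $z\ge0$. The sign condition is the only real point: it is what makes $\RM e^{-zt}\le1$ on the integration range, and the inequality fails for $z<0$.
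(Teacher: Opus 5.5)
Your proof is correct. It reaches the same key comparison as the paper, $h(z)\le h(0)$ for $h(z):=(\varPhi(z+\alpha)-\varPhi(z))/\varphi(z)$, but by a more direct and more elementary route.

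The paper differentiates $h$ with the quotient rule and rewrites the result as $h'(z)=-\int_z^{z+\alpha}(t-z)\varphi(t)\,\mathrm{d}t/\varphi(z)<0$. It then concludes $h(z)\le h(0)$ from monotonicity.

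You instead factor $\varphi(z+t)$ to get the closed form $h(z)=\int_0^\alpha \mathrm{e}^{-zt-t^2/2}\,\mathrm{d}t$. Its integrand is pointwise non-increasing in $z$, so the comparison with $z=0$ is immediate and needs no calculus on $h$. In fact, the paper's $h'$ is exactly what you get by differentiating your formula under the integral sign and substituting $s=z+t$. So the two arguments share the same underlying mechanism, and your representation simply exposes it.

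Your version also makes two things more visible than the paper's derivative formula does. First, $h$ is decreasing on all of $\mathbb{R}$. Second, the hypothesis $z\ge0$ enters only when comparing with $h(0)$, which is why the first inequality fails for every $z<0$.

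For the second inequality there is no real difference. Both arguments amount to bounding $\mathrm{e}^{-t^2/2}\le 1$ on $[0,\alpha]$; the paper cites Lemma~\ref{LEM:0} but effectively redoes that computation inline.
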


\begin{proof}
Define a function $h:\BB R_{\ge0}\to\BB R$ as
\AL{h(z):=\frac{\varPhi(z+\alpha)-\varPhi(z)}{\varphi(z)},\qquad z\ge0.}
Since $\varPhi'(z)=\varphi(z)$, and $\varphi'(z)=-z\varphi(z)$, then
\AL{
h'(z)&=\frac{(\varphi(z+\alpha) - \varphi(z))\varPhi'(z)+(\varPhi(z+\alpha) - \varPhi(z))(-\varphi'(z))}{\varphi(z)^2}
\\&=\frac{(\varphi(z+\alpha) - \varphi(z))\varphi(z)+(\varPhi(z+\alpha) - \varPhi(z))(z\varphi(z))}{\varphi(z)^2}
\\&=\frac{\varphi(z+\alpha) - \varphi(z)+z(\varPhi(z+\alpha) - \varPhi(z))}{\varphi(z)}
\\&=\frac{\int_z^{z+\alpha}\varphi'(t)\DD t+z\int_z^{z+\alpha}\varPhi'(t)\DD t}{\varphi(z)}
\\&=\frac{\int_z^{z+\alpha}(-t\varphi(t))\DD t+z\int_z^{z+\alpha}\varphi(t)\DD t}{\varphi(z)}
\\&=-\frac{\int_z^{z+\alpha}(t-z)\varphi(t)\DD t}{\varphi(z)}<0
.}
Hence, $h(z)$ is a decreasing function. It follows from Lemma~\ref{LEM:0} that
\AL{&\frac{\varPhi(z+\alpha)-\varPhi(z)}{\varphi(z)}=h(z)\le h(0)
=\frac{\varPhi(\alpha)-\varPhi(0)}{\varphi(0)}=\sqrt{2\uppi}\,(\varPhi(\alpha)-\varPhi(0))
\\={}&\sqrt{2\uppi}\int_0^\alpha\varPhi'(t)\DD t=\sqrt{2\uppi}\int_0^\alpha\frac{\RM e^{-t^2/2}}{\sqrt{2\uppi}}\DD t\le\sqrt{2\uppi}\int_0^\alpha\frac1{\sqrt{2\uppi}}\DD t=\int_0^\alpha\DD t=\alpha.\qedhere}
\end{proof}

\begin{LEM}[a Gaussian inverse estimate]\label{LEM:4}
For every $0<v\le\frac12$,
\AL{\varphi(\varPhi^{-1}(1-v))\le v\sqrt{2\ln\frac1v}.}
\end{LEM}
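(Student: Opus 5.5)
Set $z:=\varPhi^{-1}(1-v)$; since $0<v\le\frac12$ we have $z\ge0$, and $\BAR\varPhi(z)=v$. The goal becomes the Mills-ratio-type estimate $\varphi(z)\le\BAR\varPhi(z)\sqrt{2\ln(1/\BAR\varPhi(z))}$ for $z\ge0$. The plan is to split into two regimes: a small-$z$ regime, where $\sqrt{2\ln(1/v)}$ is bounded below by an absolute constant, and a large-$z$ regime, where a lower bound on $\BAR\varPhi$ in terms of $\varphi$ (the Gordon/Birnbaum inequalities cited in the bibliography) does the work.

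\emph{Large-$z$ regime.} Use Birnbaum's lower bound on Mills' ratio, $\BAR\varPhi(z)\ge \varphi(z)\cdot\frac{2}{z+\sqrt{z^2+4}}$. It gives $\varphi(z)/v\le \frac{z+\sqrt{z^2+4}}{2}$. It then suffices to show
\AL{\frac{z+\sqrt{z^2+4}}{2}\le\sqrt{2\ln\tfrac1v}.}
For this, use the crude upper bound $v=\BAR\varPhi(z)\le \frac12\RM e^{-z^2/2}$ (valid for $z\ge0$). It yields $2\ln\frac1v\ge z^2+2\ln 2$. The remaining task is the elementary inequality $\frac{z+\sqrt{z^2+4}}{2}\le\sqrt{z^2+2\ln2}$. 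Squaring, this reads $\frac{2z^2+4+2z\sqrt{z^2+4}}{4}\le z^2+2\ln 2$, i.e.\ $z\sqrt{z^2+4}\le z^2+4\ln2-2$. Since $4\ln 2-2\approx0.77$, this fails for small $z$. It holds only when $z^2+4\le(z+c/z)^2$ with $c\approx0.77$, i.e.\ roughly when $z\gtrsim 1.5$. So the large-$z$ regime covers $z\ge z_0$ for an explicit $z_0$ around $1.5$ to $2$.

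\emph{Small-$z$ regime.} For $0\le z\le z_0$, bound both sides monotonically. Here $v\ge \BAR\varPhi(z_0)$, so $\sqrt{2\ln(1/v)}\ge\sqrt{2\ln 2}\approx1.18$. It then suffices to show $\varphi(z)\le 1.18\,\BAR\varPhi(z)$ on $[0,z_0]$. The ratio $\varphi/\BAR\varPhi$ is the hazard function, which is increasing and roughly equal to $z$ for large $z$; it equals about $0.80$ at $0$ and is below $1.18$ only up to $z\approx0.6$. This is not enough on its own, so I would refine the argument. The fix is to compare the ratio $\varphi(z)/\BAR\varPhi(z)$ directly with $\sqrt{2\ln(1/\BAR\varPhi(z))}$ using the sharper bound $v\le \varphi(z)/z$, which gives $2\ln(1/v)\ge z^2+\ln(2\uppi)+2\ln z$. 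This bound combined with Birnbaum should cover a lower threshold $z_0$. The leftover compact interval is then handled by a finite check using monotonicity of both sides: the hazard is increasing and $\sqrt{2\ln(1/\BAR\varPhi)}$ is increasing, so one compares endpoints on a few subintervals.

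\textbf{Main obstacle.} The hard part is gluing the two regimes near $z\approx 0.5$ to $1.5$. There the inequality is tight-ish (at $z=0$ the ratio is $0.80/1.18$), and the crude tail bounds lose too much. I would first try Gordon's upper bound together with Birnbaum's lower bound to push the analytic regime down. Any remaining gap would be closed by the interval-wise monotone comparison described above.
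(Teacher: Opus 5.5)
\textbf{There is a genuine gap: your large-$z$ regime fails, in the opposite direction from what you claim.} Put $r(z):=\varphi(z)/\bar{\varPhi}(z)$ and write Birnbaum's bound squared as
\[
\Big(\frac{z+\sqrt{z^2+4}}{2}\Big)^{2}=z^2+1+g(z),\qquad g(z):=\frac{2z}{\sqrt{z^2+4}+z},
\]
where $g$ increases from $0$ to $1$. Combined with $\bar{\varPhi}(z)\le\frac12e^{-z^2/2}$, you need $1+g(z)\le2\ln2$, i.e.\ $g(z)\le 2\ln 2-1\approx0.386$. This holds only for $z\lesssim0.49$ and fails for every larger $z$. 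Your rewriting $z^2+4\le(z+c/z)^2$ is correct, but it expands to $z^2(4-2c)\le c^2$, which is a small-$z$ condition.

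This is structural, not a matter of constants. In the tail, after Birnbaum you must dominate a quantity tending to $z^2+2$. The Chernoff-type bound only returns $2\ln(1/v)\ge z^2+2\ln2$. It discards exactly the $2\ln z+\ln(2\pi)$ term that makes the lemma true as $z\to\infty$, where both sides are $z+O(\ln z/z)$.

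So both of your regimes actually sit near the origin. The crude one covers $[0,0.49]$, and the ``hazard $\le\sqrt{2\ln2}$'' one covers roughly $[0,0.55]$. (There the bound $\sqrt{2\ln(1/v)}\ge\sqrt{2\ln2}$ comes from $v\le\frac12$, not from $v\ge\bar{\varPhi}(z_0)$.) Nothing you actually argue covers the tail. Your diagnosis of the obstacle is off for the same reason. The ratio $r(z)/\sqrt{2\ln(1/\bar{\varPhi}(z))}$ is about $0.68$ at $z=0$ and $0.8$ at $z=1$, and tends to $1$ only as $z\to\infty$. So the middle range is slack, and the tail is where a sharp bound is needed.

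Hence the Gordon bound $\bar{\varPhi}(z)\le\varphi(z)/z$, which you list only as a refinement, is indispensable. With Birnbaum it reduces the claim to $g(z)\le\ln(2\pi)-1+2\ln z$. This is true for $z\ge0.88$, and stays true beyond because the right side has derivative $2/z\ge g'(z)$. The remaining range can be closed with two table lookups, using that $r$ is increasing ($r'=r(r-z)\ge0$ by Gordon):
\begin{itemize}
\item on $[0,0.5]$: $r\le r(0.5)\approx1.14<\sqrt{2\ln2}$;
\item on $[0.5,0.88]$: $r\le r(0.88)\approx1.43<\sqrt{2\ln(1/\bar{\varPhi}(0.5))}\approx1.53$.
\end{itemize}
None of this is carried out or correctly organized in your proposal.

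The paper avoids regime splitting and numerics by applying Birnbaum to a derivative instead of to the inequality itself. Set $h(z):=\ln\frac1{\bar{\varPhi}(z)}-\frac12r(z)^2$. Using $r'=r(r-z)$, one gets $h'(z)=r(1+zr-r^2)=r\,(r-r_-)(r_+-r)$ with $r_\pm=\frac{z\pm\sqrt{z^2+4}}2$. Birnbaum is exactly $r\le r_+$, and $r>0>r_-$, so $h'\ge0$. Therefore $h(z)\ge h(0)=\ln2-\frac1\pi>0$, which is the lemma.
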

\begin{proof}
Let $z:=\varPhi^{-1}(1-v)\ge0$, so that $v=1-\varPhi(z)=\BAR\varPhi(z)$.

Note that it is equivalent to show that
\AL{\ln\Big(\frac{1}{\BAR{\varPhi}(z)}\Big)\ge\frac{\varphi(z)^2}{2\BAR{\varPhi}(z)^2}.}
Define a function $h:\BB R_{\ge0}\to\BB R$ as
\AL{h(z):= \ln\Big(\frac{1}{\BAR{\varPhi}(z)}\Big) - \frac{\varphi(z)^2}{2\BAR{\varPhi}(z)^2},\qquad z\ge0.}
Since $z\ge 0$, then by \cite{gordon1941values},
\AL{\frac{\varphi(z)}{\BAR\varPhi(z)}\ge z\ge\frac z2.}
and by \cite{birnbaum1942inequality},
\AL{\frac{\varphi(z)}{\BAR\varPhi(z)}\le\frac2{\sqrt{z^2+4}-z}=\frac{\sqrt{z^2+4}+z}2=\frac z2+\frac{\sqrt{z^2+4}}2.}
Together, we have
\AL{\frac z2<\frac{\varphi(z)}{\BAR\varPhi(z)}\le\frac z2+\frac{\sqrt{z^2+4}}2.}
Furthermore, since $\BAR\varPhi'(z)=-\varphi(z)$, and $\varphi'(z)=-z\varphi(z)$,
\AL{
h'(z)&=\frac{\varphi(z)}{\BAR{\varPhi}(z)} \Big(1 + z\frac{\varphi(z)}{\BAR{\varPhi}(z)} -\Big(\frac{\varphi(z)}{\BAR{\varPhi}(z)}\Big)^2\Big)
\\&=\frac{\varphi(z)}{\BAR{\varPhi}(z)} \Big(\frac{\varphi(z)}{\BAR{\varPhi}(z)}-\frac z2+\frac{\sqrt{z^2+4}}2\Big)\Big(\frac z2+\frac{\sqrt{z^2+4}}2-\frac{\varphi(z)}{\BAR{\varPhi}(z)}\Big)
\\&\ge0
.}
Hence, the function $h(z)$ is non-decreasing w.r.t.\ $z$. It follows that for any $0<v\le\frac12$,
\AL{
\ln\Big(\frac{1}{v}\Big)-\frac{\varphi(\varPhi(1-v))^2}{2v^2}=\ln\Big(\frac{1}{\BAR{\varPhi}(z)}\Big)-\frac{\varphi(z)^2}{2\BAR{\varPhi}(z)^2}=h(z)\ge h(0)=\ln2-\frac1\uppi>0.
}
Therefore, $\varphi(\varPhi^{-1}(1-v))\le v\sqrt{2\ln\frac1v}$.\qedhere

\end{proof}

\begin{LEM}[a Gamma integral]\label{LEM:2}
Let $\Gamma(\beta)$ denote the Gamma function $(\beta>0)$. For any $\alpha,\beta>0$,
\AL{\int_0^1t^{\alpha-1}\Big(\!\ln\frac1t\Big)^{\beta-1}\!\DD t=\frac{\Gamma(\beta)}{\alpha^\beta}.}
In particular,
\AL{\int_0^1t\sqrt{\ln\frac1t}\DD t=\frac{\Gamma\big(\frac12+1\big)}{(1+1)^{\sfrac12+1}}=\frac{\sqrt\uppi}{4\sqrt2}.}
\end{LEM}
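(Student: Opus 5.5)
The plan is to reduce the integral to the Euler integral defining $\Gamma(\beta)$ by the substitution $t=\RM e^{-u}$. As $t$ runs over $(0,1]$, $u=\ln\frac1t$ runs over $[0,\infty)$ in reverse, with $\DD t=-\RM e^{-u}\DD u$. Under this change of variables $t^{\alpha-1}=\RM e^{-(\alpha-1)u}$ and $\big(\ln\frac1t\big)^{\beta-1}=u^{\beta-1}$. Combining with the Jacobian $\RM e^{-u}$ and flipping the limits, the integral becomes
\AL{\int_0^\infty \RM e^{-\alpha u}\,u^{\beta-1}\DD u.}
A second linear substitution $s=\alpha u$ (valid since $\alpha>0$) turns this into
\AL{\alpha^{-\beta}\int_0^\infty \RM e^{-s}s^{\beta-1}\DD s=\frac{\Gamma(\beta)}{\alpha^\beta},}
by the definition of the Gamma function for $\beta>0$.

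The only point needing care is legitimacy, since the integrand may be unbounded near $t=0$ (if $\alpha<1$) or near $t=1$ (if $\beta<1$), so the integral is improper. I would handle this by noting that the integrand is nonnegative on $(0,1)$. The substitution is a $C^1$ monotone bijection $(0,\infty)\to(0,1)$, so the change-of-variables formula for Lebesgue integrals of nonnegative measurable functions applies without any finiteness assumption. Finiteness of the right-hand side for $\alpha,\beta>0$ is standard: it is integrable near $s=0$ because $\beta>0$, and near $\infty$ because of the exponential decay. This is the main (mild) obstacle; everything else is routine.

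For the particular case, I would take $\alpha=2$ and $\beta=\frac32$, so that $t^{\alpha-1}=t$ and $\big(\ln\frac1t\big)^{\beta-1}=\sqrt{\ln\frac1t}$. This gives $\Gamma(\frac32)/2^{3/2}$. Then $\Gamma(\frac32)=\frac12\Gamma(\frac12)=\frac{\sqrt\uppi}2$ and $2^{3/2}=2\sqrt2$, which yield $\frac{\sqrt\uppi}{4\sqrt2}$ as claimed.
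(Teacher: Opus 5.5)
Your proof is correct and is essentially the paper's argument: the paper makes the single substitution $z=\alpha\ln\frac1t$, which is your $t=\mathrm{e}^{-u}$ followed by $s=\alpha u$ combined into one step, and then takes $\alpha=2$, $\beta=\frac32$ exactly as you do. Your justification of the change of variables for the nonnegative improper integral is a small addition that the paper omits.
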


\begin{proof}
Let $z:=\alpha\ln\frac1t$. Then,
\AL{
\int_0^1t^{\alpha-1}\Big(\!\ln\frac1t\Big)^{\beta-1}\!\DD t={}&\int^{+\infty}_{0} \big(\RM e^{-z/\alpha}\big)^{\alpha-1}\Big(\frac z\alpha\Big)^{\beta-1}\frac{\RM e^{-z/\alpha}}{\alpha}\DD z
\\={}&\frac1{\alpha^{\beta}}\int^{+\infty}_{0}\RM e^{-z} z^{\beta-1}\DD z =\frac1{\alpha^{\beta}}\Gamma(\beta)
.\qedhere}
\end{proof}

\begin{LEM}[a mixed integral estimate]\label{LEM:3}
For every $\alpha>0$ and $0<\beta\le\alpha$,
\AL{
\int_0^\beta t\RM e^{-t}\sqrt{\ln\frac\alpha t}\DD t&\le\sqrt{\ln\alpha}\,(1-\RM e^{-\beta+\ln(\beta+1)})+\frac{\sqrt\uppi}{4\sqrt2}
.}
\end{LEM}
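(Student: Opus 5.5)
The plan is to split the integrand so that one piece reproduces $\int_0^\beta t\RM e^{-t}\DD t$ and the other reduces to the Gamma integral of Lemma~\ref{LEM:2}. First I note that the right-hand side involves $\sqrt{\ln\alpha}$, so the statement implicitly assumes $\alpha\ge1$; I will work under that assumption. The integrand is then well defined on $(0,\beta]$, since $t\le\beta\le\alpha$ gives $\ln\frac\alpha t\ge0$. Integration by parts gives the identity
\AL{\int_0^\beta t\RM e^{-t}\DD t=1-(\beta+1)\RM e^{-\beta}=1-\RM e^{-\beta+\ln(\beta+1)}.}
This shows that the first term on the right-hand side is exactly $\sqrt{\ln\alpha}\int_0^\beta t\RM e^{-t}\DD t$.

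Next I bound the square root pointwise, using $\ln\frac\alpha t=\ln\alpha+\ln\frac1t$ and treating two ranges of $t$ separately.
\begin{itemize}
\item For $0<t\le\min(1,\beta)$, both $\ln\alpha\ge0$ and $\ln\frac1t\ge0$. Subadditivity of the square root, $\sqrt{a+b}\le\sqrt a+\sqrt b$ for $a,b\ge0$, then gives $\sqrt{\ln\frac\alpha t}\le\sqrt{\ln\alpha}+\sqrt{\ln\frac1t}$.
\item For $1<t\le\beta$ (a range that is nonempty only if $\beta>1$), we have $\ln\frac1t<0$. Hence directly $\sqrt{\ln\frac\alpha t}\le\sqrt{\ln\alpha}$.
\end{itemize}
Combining the two ranges, for all $t\in(0,\beta]$,
\AL{\sqrt{\ln\frac\alpha t}\le\sqrt{\ln\alpha}+\BBM1_{[t\le1]}\sqrt{\ln\frac1t}.}

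Multiplying by $t\RM e^{-t}\ge0$ and integrating over $(0,\beta]$ bounds the left-hand side by the sum of two terms. The first is $\sqrt{\ln\alpha}\,(1-\RM e^{-\beta+\ln(\beta+1)})$, by the identity above. The second is $\int_0^{\min(1,\beta)}t\RM e^{-t}\sqrt{\ln\frac1t}\DD t$. To bound it, I use $\RM e^{-t}\le1$ and extend the upper limit to $1$, which is allowed because the integrand is nonnegative on $(0,1]$. This gives at most $\int_0^1t\sqrt{\ln\frac1t}\DD t$, which equals $\frac{\sqrt\uppi}{4\sqrt2}$ by the particular case in Lemma~\ref{LEM:2}.

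The only genuinely delicate point is the sign bookkeeping. Subadditivity of the square root needs both summands nonnegative, which fails for $t>1$; this is why the split at $t=1$ is necessary. One also has to make the assumption $\alpha\ge1$ explicit. Everything else is an elementary computation.
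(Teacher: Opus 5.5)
Your proposal is correct and follows the paper's proof essentially step for step. It uses the same pointwise bound $\sqrt{\ln\frac{\alpha}{t}}\le\sqrt{\ln\alpha}+\mathbf{1}_{[t\le1]}\sqrt{\ln\frac1t}$, the closed form $\int_0^\beta t e^{-t}\,dt=1-(\beta+1)e^{-\beta}$, and drops $e^{-t}$ and extends to $[0,1]$ before invoking Lemma~\ref{LEM:2}. Your remark that $\alpha\ge1$ is needed is also right: the paper uses $\ln\alpha\ge0$ tacitly in the subadditivity step, and it holds in the application with $\alpha=n-2\ge1$.
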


\begin{proof}
Since $\ln\frac1t\ge0$ only when $0\le t\le1$, then by the triangle inequality,
\AL{
&\sqrt{\ln\frac\alpha t}=\sqrt{\ln\alpha+\ln\frac1t}\le\sqrt{\ln\alpha+\BBM1_{[0\le t\le1]}\ln\frac1t}
\\\le{}&\sqrt{\ln\alpha}+\sqrt{\BBM1_{[0\le t\le1]}\ln\frac1t}=\sqrt{\ln\alpha}+\BBM1_{[0\le t\le1]}\sqrt{\ln\frac1t}
.}
It follows from Lemma~\ref{LEM:2} that
\AL{
\int_0^\beta t\RM e^{-t}\sqrt{\ln\frac\alpha t}\DD t\le{}&\int_0^\beta t\RM e^{-t}\Big(\sqrt{\ln\alpha}+\BBM1_{[0\le t\le1]}\sqrt{\ln\frac1t}\Big)\DD t
\\={}&\sqrt{\ln\alpha}\int_0^\beta t\RM e^{-t}\DD t+\int_0^{\min\{1,\beta\}} t\RM e^{-t}\sqrt{\ln\frac1t}\DD t
\\\le{}&\sqrt{\ln\alpha}\int_0^\beta t\RM e^{-t}\DD t+\int_0^{1} t\RM e^{-t}\sqrt{\ln\frac1t}\DD t
\\\le{}&\sqrt{\ln\alpha}\int_0^\beta t\RM e^{-t}\DD t+\int_0^{1} t\sqrt{\ln\frac1t}\DD t
\\={}&\sqrt{\ln\alpha}\,(1-\RM e^{-\beta+\ln(\beta+1)})+\frac{\sqrt\uppi}{4\sqrt2}
.\qedhere}
\end{proof}

\begin{LEM}[an integer function bound]\label{LEM:5}
For every integer $n\ge3$,
\AL{
\frac{\sqrt2}{(n-2)^2}\Big(\sqrt{\ln(n-2)}(1-\RM e^{-\frac{n}2-1+\ln\frac n2})+\frac{\sqrt\uppi}{4\sqrt2}\Big)
\le\frac32\sqrt{\frac{\uppi}{\ln3}}\frac{\sqrt{\ln n}}{n(n-1)}
.}
\end{LEM}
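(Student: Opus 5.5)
The plan is to reduce the inequality to an elementary comparison of one-variable functions of $n$, check small $n$ directly, and handle large $n$ asymptotically.

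\emph{Step 1: simplify the left side.} Write $\beta:=\frac n2$. The exponent is $-\frac n2-1+\ln\frac n2$, which is not quite the exponent $-\beta+\ln(\beta+1)$ of Lemma~\ref{LEM:3}. Either way, the factor $1-\RM e^{-\frac n2-1+\ln\frac n2}$ lies in $[0,1]$ and tends to $1$ rapidly. In a crude version I would bound it by $1$. At $n=3$ the first summand vanishes anyway, since $\ln(n-2)=0$. Multiplying through by $n(n-1)(n-2)^2/\sqrt{\ln n}$, the claim becomes
\begin{align*}
F(n):=\frac{n(n-1)}{(n-2)^2}\cdot\frac{\sqrt2\,\sqrt{\ln(n-2)}\,c_n+\frac{\sqrt\uppi}{4}}{\sqrt{\ln n}}\le\frac32\sqrt{\frac{\uppi}{\ln3}},
\end{align*}
with $c_n:=1-\RM e^{-\frac n2-1+\ln\frac n2}$.

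\emph{Step 2: the base case $n=3$.} Here $F(3)=6\cdot\frac{\sqrt\uppi/4}{\sqrt{\ln3}}=\frac32\sqrt{\uppi/\ln3}$. So the inequality holds with \emph{equality} at $n=3$, which explains the constant on the right side.

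\emph{Step 3: large $n$, and the main obstacle.} As $n\to\infty$, $F(n)\to\sqrt2\approx1.414$, while the right side is $\frac32\sqrt{\uppi/\ln3}\approx2.536$. Hence the inequality holds for all sufficiently large $n$. To make this explicit, I would use $\frac{n(n-1)}{(n-2)^2}\le 1+\frac{3}{n-2}+\frac{2}{(n-2)^2}$, $\sqrt{\ln(n-2)}\le\sqrt{\ln n}$, $c_n\le1$ and $\sqrt{\ln n}\ge\sqrt{\ln N_0}$, and then solve for an explicit threshold $N_0$.

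The main obstacle is the intermediate range, and the numerical check I did there is alarming. For $n=4$ the left side of the lemma is $\frac{\sqrt2}{4}\big(\sqrt{\ln2}\,(1-2\RM e^{-3})+\frac{\sqrt\uppi}{4\sqrt2}\big)\approx0.42$, while the right side is $\approx2.536\cdot\sqrt{\ln4}/12\approx0.25$. For $n=5$ the two sides are roughly $0.22$ versus $0.16$. For $n=10$ they are roughly $0.042$ versus $0.043$, so the inequality holds there. So, as stated, the inequality appears to \emph{fail} for roughly $4\le n\le9$.

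I would therefore first recheck the constants against how Lemma~\ref{LEM:5} is used in the proof of Theorem~\ref{THM:imb}. Possibly the prefactor should be $\frac{\sqrt2}{(n-2)^2}$ times something smaller, or the comparison intended is with a different power of $n$. After correcting the statement, the proof would combine the exact base case of Step~2 with the following two pieces: a finite numerical verification up to $N_0$, and the monotone asymptotic bound of Step~3 beyond $N_0$.

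If the statement cannot be corrected, the fallback is to prove it for $n=3$ and for $n\ge10$. Using the remaining slack between $\sqrt2$ and $2.536$, I would show that $F$ stays below the right side for all $n\ge10$. I would then flag that the theorem's bound must be weakened, or the small cases $4\le n\le9$ treated separately.
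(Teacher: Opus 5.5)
\textbf{The statement as printed is false.} Your central finding is right, and the error is in the paper, not in your reasoning. Write $h(n)$ for the ratio of the left side to $\frac{\sqrt{\ln n}}{n(n-1)}$ (your $F(n)$). With the printed exponent $-\frac n2-1+\ln\frac n2$, the two sides are:
\begin{itemize}
\item about $0.376$ and $0.249$ at $n=4$ (your $0.42$ is an arithmetic slip);
\item about $0.2015$ and $0.1609$ at $n=5$;
\item about $0.1260$ and $0.1132$ at $n=6$;
\item about $0.0867$ and $0.0842$ at $n=7$.
\end{itemize}
At $n=8$ it already holds ($0.0635$ versus $0.0653$). So the failure set is exactly $\{4,5,6,7\}$, not $4\le n\le 9$.

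The paper's proof follows precisely your scheme. It has equality at $n=3$. For $n\ge9$ it uses $c_n\le1$, $\ln(n-2)\le\ln n$ and $\ln n\ge\ln 9$, giving $h(n)\le(\sqrt2+\frac14\sqrt{\pi/\ln 9})\cdot\frac{72}{49}<2.52$. For $3\le n\le 8$ it does a finite check. That check merely asserts $\max\{h(3),\dots,h(8)\}=h(3)$, which is false for the printed exponent: $h(4)\approx3.83$, $h(5)\approx3.18$, $h(6)\approx2.82$ and $h(7)\approx2.61$, against $h(3)=\frac32\sqrt{\pi/\ln3}\approx2.537$. Incidentally, your expansion $\frac{n(n-1)}{(n-2)^2}=1+\frac3{n-2}+\frac2{(n-2)^2}$ is the correct one. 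The paper's display swaps the two coefficients, though its value $\frac{72}{49}$ at $n=9$ is right.

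\textbf{What is missing is the correct diagnosis.} The repairs you guess at (a smaller prefactor, another power of $n$, weakening Theorem~\ref{THM:imb}) are not needed. You sensed the mismatch with Lemma~\ref{LEM:3}, but you took $\beta=\frac n2$. In Lemma~\ref{LEM:6}, the substitution $t=(n-2)v$ over $0\le v\le\frac12$ invokes Lemma~\ref{LEM:3} with $\alpha=n-2$ and $\beta=\frac n2-1$. The factor is therefore $1-(\beta+1)\mathrm e^{-\beta}=1-\frac n2\mathrm e^{1-\frac n2}$, so the exponent should read $-\frac n2+1+\ln\frac n2$. It is a sign typo.

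With that factor the lemma is true, and your plan goes through verbatim. The values $h(4),\dots,h(8)$ are approximately $1.92,1.92,2.01,2.07,2.11$, all below the equality value $h(3)$. The $n\ge 9$ argument only uses $c_n\le 1$, which is unaffected by the correction.

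Theorem~\ref{THM:imb} also survives. The printed factor dominates the correct one, so the step in Lemma~\ref{LEM:6} that produces the printed factor is valid either way. Only the subsequent appeal to Lemma~\ref{LEM:5} breaks. Stating Lemma~\ref{LEM:5} with the correct factor repairs it, and Lemma~\ref{LEM:6} and Theorem~\ref{THM:imb} then stand as stated.
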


\begin{proof}
Define a function $h:\BB N_{\ge3}\to\BB R$:
\AL{h(n):=\frac{\frac{\sqrt2}{(n-2)^2}\big(\sqrt{\ln(n-2)}(1-\RM e^{-\frac{n}2-1+\ln\frac n2})+\frac{\sqrt\uppi}{4\sqrt2}\big)}{\frac{\sqrt{\ln n}}{n(n-1)}},\qquad n\ge3.}
Note that when $n\ge9$, we have
\AL{
h(n)&\le\frac{\frac{\sqrt2}{(n-2)^2}\big(\sqrt{\ln(n-2)}+\frac{\sqrt\uppi}{4\sqrt2}\big)}{\frac{\sqrt{\ln n}}{n(n-1)}}=\frac{\frac{\sqrt2}{(n-2)^2}\big(\sqrt{\ln(n-2)}+\frac{\sqrt\uppi}{4\sqrt2}\big)}{\frac{\sqrt{\ln n}}{n(n-1)}}
\\&=\Big(\sqrt2\sqrt{\frac{\ln(n-2)}{\ln n}}+\frac14\sqrt{\frac{\uppi}{\ln n}}\Big)\Big(1+\frac2{n-2}+\frac3{(n-2)^2}\Big)
\\&\le\Big(\sqrt2+\frac14\sqrt{\frac{\uppi}{\ln n}}\Big)\Big(1+\frac2{n-2}+\frac3{(n-2)^2}\Big)
\\&\le\Big(\sqrt2+\frac14\sqrt{\frac{\uppi}{\ln 9}}\Big)\Big(1+\frac2{9-2}+\frac3{(9-2)^2}\Big)
\\&=\Big(\sqrt2+\frac14\sqrt{\frac{\uppi}{\ln 9}}\Big)\frac{72}{49}
<2.52.}
It follows that
\AL{
&\frac{\frac{\sqrt2}{(n-2)^2}\big(\sqrt{\ln(n-2)}+\frac{\sqrt\uppi}{4\sqrt2}\big)}{\frac{\sqrt{\ln n}}{n(n-1)}}=h(n)
\\\le{}&\!\max\!\Big\{h(3),h(4),\dots,h(8),\Big(\sqrt2+\frac14\sqrt{\frac{\uppi}{\ln 9}}\Big)\frac{72}{49}\Big\}
\\={}&h(3)=\frac32\sqrt{\frac{\uppi}{\ln3}}<2.54
.\qedhere}
\end{proof}

\begin{LEM}[a Gaussian integral estimate]\label{LEM:6}
For every integer $n\ge3$,
\AL{\int_0^{+\infty}\!\!\!\!\!\!\varPhi(z)^{n-2}\varphi(z)^2\DD z\le\frac32\sqrt{\frac{\uppi}{\ln3}}\frac{\sqrt{\ln n}}{n(n-1)}
.}
\end{LEM}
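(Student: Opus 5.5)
The plan is to change variables so that the Gaussian integral becomes an integral over tail probabilities. Lemma~\ref{LEM:4} then controls the density, and the remaining elementary integral is exactly the one bounded in Lemmas~\ref{LEM:3} and~\ref{LEM:5}.

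\textbf{Step 1 (substitution).} Substitute $u:=\varPhi(z)$, so $\DD u=\varphi(z)\DD z$, and $z\in[0,+\infty)$ corresponds to $u\in[\frac12,1)$. One factor of $\varphi$ is absorbed, and the integral becomes $\int_{1/2}^1u^{n-2}\varphi(\varPhi^{-1}(u))\DD u$. Next set $v:=1-u\in(0,\frac12]$, which gives
\AL{\int_0^{+\infty}\varPhi(z)^{n-2}\varphi(z)^2\DD z=\int_0^{1/2}(1-v)^{n-2}\varphi(\varPhi^{-1}(1-v))\DD v.}

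\textbf{Step 2 (Lemma~\ref{LEM:4} and an exponential bound).} Since $0<v\le\frac12$, Lemma~\ref{LEM:4} gives $\varphi(\varPhi^{-1}(1-v))\le v\sqrt{2\ln\frac1v}$. Also $(1-v)^{n-2}\le\RM e^{-(n-2)v}$. Now rescale with $t:=(n-2)v$, which is valid because $n\ge3$. This yields
\AL{\int_0^{+\infty}\varPhi(z)^{n-2}\varphi(z)^2\DD z\le\frac{\sqrt2}{(n-2)^2}\int_0^{\frac{n-2}2}t\RM e^{-t}\sqrt{\ln\frac{n-2}{t}}\DD t.}

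\textbf{Step 3 (Lemma~\ref{LEM:3}).} Apply Lemma~\ref{LEM:3} with $\alpha:=n-2>0$ and $\beta:=\frac{n-2}2\le\alpha$. The bound becomes $\sqrt{\ln(n-2)}\,(1-\RM e^{-\beta+\ln(\beta+1)})+\frac{\sqrt\uppi}{4\sqrt2}$. Here $-\beta+\ln(\beta+1)=-\frac n2+1+\ln\frac n2$. This exponent is larger than the exponent $-\frac n2-1+\ln\frac n2$ appearing in Lemma~\ref{LEM:5}. Hence $1-\RM e^{-\frac n2+1+\ln\frac n2}\le1-\RM e^{-\frac n2-1+\ln\frac n2}$, and because $\sqrt{\ln(n-2)}\ge0$ for $n\ge3$, the expression can be replaced by the left-hand side of Lemma~\ref{LEM:5}, divided by its prefactor $\frac{\sqrt2}{(n-2)^2}$. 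When $n=3$ we have $\ln\alpha=0$; the bound still holds and that term simply vanishes.

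\textbf{Step 4 (conclusion).} Lemma~\ref{LEM:5} then gives the claimed bound $\frac32\sqrt{\frac{\uppi}{\ln3}}\frac{\sqrt{\ln n}}{n(n-1)}$.

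The main obstacle is bookkeeping rather than any real difficulty. One must track the endpoints of the substitution so that $v\le\frac12$ and Lemma~\ref{LEM:4} applies. One must also check that the hypothesis $\beta\le\alpha$ of Lemma~\ref{LEM:3} holds, and that the slightly different exponent in Lemma~\ref{LEM:5} only weakens the bound, which it does by the monotonicity noted in Step~3. The only analytic input is Lemma~\ref{LEM:4}, which is already available.
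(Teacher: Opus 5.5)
\paragraph{Same route as the paper.} Your chain of inequalities follows the paper's proof step for step:
\begin{itemize}
\item the substitution $v=1-\varPhi(z)$;
\item Lemma~\ref{LEM:4}, together with $(1-v)^{n-2}\le\mathrm e^{-(n-2)v}$;
\item the rescaling $t=(n-2)v$;
\item Lemma~\ref{LEM:3} with $\alpha=n-2$ and $\beta=\frac n2-1$;
\item and finally Lemma~\ref{LEM:5}.
\end{itemize}
You also correctly noticed that Lemma~\ref{LEM:3} yields the exponent $-\frac n2+1+\ln\frac n2$, not the printed $-\frac n2-1+\ln\frac n2$. The paper's own chain writes the printed exponent at this point, which amounts to the same silent weakening.

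\paragraph{Where it breaks.} The problem is how you resolve that discrepancy. Weakening to the printed exponent is logically valid, but it makes Step~4 rest on Lemma~\ref{LEM:5} exactly as printed, and that statement is false for $n=4,\dots,7$.
\begin{itemize}
\item At $n=4$ its left-hand side is $\frac{\sqrt2}{4}\bigl(\sqrt{\ln2}\,(1-2\mathrm e^{-3})+\frac{\sqrt\pi}{4\sqrt2}\bigr)\approx0.376$, while its right-hand side is $\frac32\sqrt{\pi/\ln3}\cdot\frac{\sqrt{\ln4}}{12}\approx0.249$.
\item In the notation of that lemma's proof, $h(4)\approx3.83$, $h(5)\approx3.18$, $h(6)\approx2.82$ and $h(7)\approx2.61$. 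All of these exceed $h(3)=\frac32\sqrt{\pi/\ln3}\approx2.537$.
\item So the asserted identity $\max\{h(3),\dots,h(8)\}=h(3)$ fails.
\end{itemize}
As written, your Step~4 therefore does not prove the statement for $n=4,\dots,7$, and neither does the paper's proof.

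\paragraph{The fix.} Do not weaken. Keep your sharper bound
$\frac{\sqrt2}{(n-2)^2}\bigl(\sqrt{\ln(n-2)}\,(1-\frac n2\mathrm e^{1-n/2})+\frac{\sqrt\pi}{4\sqrt2}\bigr)$
and redo the finite check of Lemma~\ref{LEM:5} with it.
\begin{itemize}
\item For $n=4,\dots,8$ the corresponding ratios are approximately $1.92,\ 1.92,\ 2.01,\ 2.07,\ 2.11$. All are below $h(3)\approx2.537$.
\item The case $n\ge9$ is unaffected. That part of the proof of Lemma~\ref{LEM:5} discards the exponential term entirely and bounds the ratio by $\bigl(\sqrt2+\frac14\sqrt{\pi/\ln9}\bigr)\frac{72}{49}\approx2.52$.
\item One small correction there: the displayed expansion should read $\frac{n(n-1)}{(n-2)^2}=1+\frac3{n-2}+\frac2{(n-2)^2}$. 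The value $\frac{72}{49}$ and the monotonicity in $n$ are nonetheless correct.
\end{itemize}
With this corrected version of Lemma~\ref{LEM:5}, your argument proves the statement for every $n\ge3$.
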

\begin{proof}
Let $v:=1-\varPhi(z)$ and $t:=(n-2)v$. By the fact that $1-v\le\RM e^{-v}$ and Lemmas~\ref{LEM:4}, \ref{LEM:3}, \& \ref{LEM:5},
\AL{
\int_0^{+\infty}\!\!\!\!\!\!\varPhi(z)^{n-2}\varphi(z)^2\DD z={}&\int_0^{1/2}\!\!\!\!(1-v)^{n-2}\varphi(\varPhi^{-1}(1-v))\DD v
\le\int_0^{1/2}\!\!\!\!(\RM e^{-v})^{n-2}\varphi(\varPhi^{-1}(1-v))\DD v
\\\le{}&\int_0^{1/2}\!\!\!\!(\RM e^{-v})^{n-2}v\sqrt{2\ln\frac1v}\DD v
=\frac{\sqrt2}{(n-2)^2}\int_0^{\frac{n}2-1}\!\!\!\!\!\! t\RM e^{-t}\sqrt{\ln\frac{n-2}t}\DD t
\\\le{}&\frac{\sqrt2}{(n-2)^2}\Big(\sqrt{\ln(n-2)}(1-\RM e^{-\frac{n}2-1+\ln\frac n2})+\frac{\sqrt\uppi}{4\sqrt2}\Big)
\\\le{}&\frac32\sqrt{\frac{\uppi}{\ln3}}\frac{\sqrt{\ln n}}{n(n-1)}<2.54\frac{\sqrt{\ln n}}{n(n-1)}
.\qedhere}
\end{proof}

With the technical lemmata above, we are now ready to prove Theorem~\ref{THM:imb}.

\begin{proof}[Proof of Theorem~\ref{THM:imb}]
Let $\BM\xi:=\BM P^{(l)}\BM x^{(l)}$ denote the logits of routing weights, so that $\BM\pi^{(l)}=\OP{softmax}(\BM\xi)$. Let $\xi_{(1)}\ge\dots\ge\xi_{(n)}$ denote the order statistics of $\BM\xi$ (i.e., $\xi_{(1)}$ is the largest entry of $\BM\xi$, $\xi_{(2)}$ is the second largest entry of $\BM\xi$, etc.). Note that
\AL{
\OP{ESS}(\BM\pi^{(l)})={}&\frac{\|\BM\pi^{(l)}\|_1^2}{\|\BM\pi^{(l)}\|_2^2}=\frac{\big(\!\sum_{i=1}^n\pi_i^{(l)}\big)^2}{\sum_{i=1}^n(\pi^{(l)}_i)^2}=\frac{\big(\!\sum_{i=1}^n\OP{softmax}(\BM\xi)_i\big)^2}{\sum_{i=1}^n\OP{softmax}(\BM\xi)_i^2}
\\={}&\frac{\big(\!\sum_{i=1}^n\RM e^{\xi_i}\big)^2}{\sum_{i=1}^n(\RM e^{\xi_{i}})^2}=\frac{\big(\!\sum_{i=1}^n\RM e^{\xi_{(i)}}\big)^2}{\sum_{i=1}^n(\RM e^{\xi_{(i)}})^2}\le\frac{\big(\!\sum_{i=1}^n\RM e^{\xi_{(i)}}\big)^2}{(\RM e^{\xi_{(1)}})^2}
\\={}&\bigg(\!1+\sum_{i=2}^n\frac1{\RM e^{\xi_{(1)}-\xi_{(i)}}}\!\bigg)^{\!2}\le\bigg(\!1+\sum_{i=2}^n\frac1{\RM e^{\xi_{(1)}-\xi_{(2)}}}\!\bigg)^{\!2}
\\={}&\Big(1+\frac{n-1}{\RM e^{\xi_{(1)}-\xi_{(2)}}}\Big)^{\!2}=\Big(1+\frac1{\RM e^{\xi_{(1)}-\xi_{(2)}-\ln(n-1)}}\Big)^{\!2}
.}
Since $\BM P^{(l)}$ have i.i.d.\ $\CAL N(0,\sigma^2)$ entries, then $\BM\xi=\BM P^{(l)}\BM x^{(l)}$ have i.i.d\ $\CAL N(0,\sigma^2\|\BM x^{(l)}\|_2^2)$ entries. Let
\AL{\kappa:=\frac{1}{\frac32\sqrt{\frac{\uppi}{\ln3}\ln n}+\frac1{\sqrt{2\uppi}\,2^{n-\log_2n-1}}}.}
For any $0<\delta<1$, with $z_{(i)}:=\frac{\xi_{(i)}-0}{\sigma\|\BM x^{(l)}\|_2}$ ($i=1,2$), by Lemmas~\ref{LEM:0}, \ref{LEM:1}, \& \ref{LEM:6},
\AL{
&\Prb[\xi_{(1)}-\xi_{(2)}\le\delta\kappa\sigma\|\BM x^{(l)}\|_2]=\Prb[z_{(1)}-z_{(2)}\le\delta\kappa]
\\={}&\int_{-\infty}^{+\infty}\!\!\!\int_{z_{(2)}}^{z_{(2)}+\delta\kappa}\!\!\!\!\!\!\!\!\!\!\!n(n-1)\varphi(z_{(1)})\varphi(z_{(2)})\varPhi(z_{(2)})^{n-2}\DD{z_{(1)}}\DD{z_{(2)}}
\\={}&n(n-1)\!\int_{-\infty}^{+\infty}\!\!\!\int_{z_{(2)}}^{z_{(2)}+\delta\kappa}\!\!\!\!\!\!\!\!\!\!\!\varphi(z_{(1)})\DD{z_{(1)}}\varphi(z_{(2)})\varPhi(z_{(2)})^{n-2}\DD{z_{(2)}}
\\={}&n(n-1)\!\int_{-\infty}^{+\infty}\!\!\!(\varPhi(z_{(2)}+\delta\kappa)-\varPhi(z_{(2)}))\varphi(z_{(2)})\varPhi(z_{(2)})^{n-2}\DD{z_{(2)}}
\\={}&n(n-1)\bigg(\!\int_{-\infty}^0+\int_0^{+\infty}\!\bigg)(\varPhi(z_{(2)}+\delta\kappa)-\varPhi(z_{(2)}))\varphi(z_{(2)})\varPhi(z_{(2)})^{n-2}\DD{z_{(2)}}
\\\le{}&n(n-1)\bigg(\!\int_{-\infty}^0\!\!\frac{\delta\kappa}{\sqrt{2\uppi}}\varphi(z_{(2)})\varPhi(z_{(2)})^{n-2}\DD{z_{(2)}}+\int_0^{+\infty}\!\!\!\!\delta\kappa\varphi(z_{(2)})\varphi(z_{(2)})\varPhi(z_{(2)})^{n-2}\DD{z_{(2)}}\!\!\bigg)
\\={}&\delta\kappa n(n-1)\bigg(\!\frac1{\sqrt{2\uppi}}\!\int_{-\infty}^0\!\!\varphi(z_{(2)})\varPhi(z_{(2)})^{n-2}\DD{z_{(2)}}+\int_0^{+\infty}\!\!\!\!\varPhi(z_{(2)})^{n-2}\varphi(z_{(2)})^2\DD{z_{(2)}}\!\!\bigg)
\\={}&\delta\kappa n(n-1)\bigg(\!\frac1{\sqrt{2\uppi}}\frac{\varPhi(0)^{n-1}-\varPhi(-\infty)^{n-1}}{n-1}+\int_0^{+\infty}\!\!\!\!\varPhi(z_{(2)})^{n-2}\varphi(z_{(2)})^2\DD{z_{(2)}}\!\!\bigg)
\\={}&\delta\kappa n(n-1)\bigg(\!\frac1{\sqrt{2\uppi}(n-1)2^{n-1}}+\int_0^{+\infty}\!\!\!\!\varPhi(z_{(2)})^{n-2}\varphi(z_{(2)})^2\DD{z_{(2)}}\!\!\bigg)
\\\le{}&\delta\kappa n(n-1)\bigg(\!\frac1{\sqrt{2\uppi}(n-1)2^{n-1}}+\frac32\sqrt{\frac{\uppi}{\ln3}}\frac{\sqrt{\ln n}}{n(n-1)}\!\bigg)
\\={}&\delta\kappa\bigg(\frac32\sqrt{\frac{\uppi}{\ln3}\ln n}+\frac n{\sqrt{2\uppi}\,2^{n-1}}\bigg)
=\delta\kappa\bigg(\frac32\sqrt{\frac{\uppi}{\ln3}\ln n}+\frac1{\sqrt{2\uppi}\,2^{n-\log_2n-1}}\bigg)=\delta
.}
This implies $\Prb[\xi_{(1)}-\xi_{(2)}>\delta\kappa\sigma\|\BM x^{(l)}\|_2]\ge1-\delta$. It follows that with probability at least $1-\delta$,
\AL{
\OP{ESS}(\BM\pi^{(l)})\le{}&\Big(1+\frac1{\RM e^{\xi_{(1)}-\xi_{(2)}-\ln(n-1)}}\Big)^{\!2}\le\Big(1+\frac1{\RM e^{\delta\kappa\sigma\|\BM x^{(l)}\|_2-\ln(n-1)}}\Big)^2
\\={}&\left(1+\frac1{\exp\!\left(\!\dfrac{\delta\sigma\|\BM x^{(l)}\|_2}{\frac32\sqrt{\frac{\uppi}{\ln3}\ln n}+\frac1{\sqrt{2\uppi}\,2^{n-\log_2n-1}}}-\ln(n-1)\!\right)}\!\right)^{\!\!2}
.\qedhere}
\end{proof}

\subsection{Proof of Theorem~\ref{THM:topk}}\label{app:theo-topk}

Before stating our proof of Theorem~\ref{THM:imb}, we present a technical lemma that we will employ.

To simplify notation, we omit the superscript $^{(l)}$ in this proof. For an ordered subset $\CAL I=(i_1,\dots,i_k)\subseteq\{1,\dots,n\}$, let $q(\CAL I)$ denote the probability of sampling an ordered subset $\CAL I$ from $\BM q$ without replace:
\AL{Q(\CAL I)=Q(i_1,\dots,i_k):=\prod_{j=1}^k\frac{q_{i_j}}{1-\sum_{j'=1}^{j-1}q_{i_{j'}}}.}
Let $\CAL P_k$ denote the set of permutations over $\{1,\dots,n\}$. For $\varpi\in\CAL P_k$, define the permutation action as $\varpi(i_1,\dots,i_k):=(i_{\varpi(1)},\dots,i_{\varpi(k)})$. Let $Q(\CAL I)$ denote the probability of sampling an unordered subset $\CAL I$ from $\BM q$ without replacement:
\AL{\BAR Q(\CAL I)=\Prb_{\CAL I\sim\BM q}[\CAL I]=\sum_{\varpi\in\CAL P_n}Q(\varpi(\CAL I)).}

\begin{LEM}[swapping a pair]\label{LEM:swap}
Given a size-$k$ subset $\CAL I\subseteq\{1,\dots,n\}$, for a LoRA $i\in\CAL I$ and another LoRA $i^\dagger\in\{1,\dots,n\}\setminus\CAL I$, if $q_i\le q_{i^\dagger}$, then replacing $i$ with $i^\dagger$ increases the unordered sampling probability:
\AL{\BAR Q((\CAL I\setminus\{i\})\cup\{i^\dagger\})\ge\BAR Q(\CAL I).}
\end{LEM}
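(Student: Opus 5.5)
The plan is a coupling over orderings. Write $\CAL I=\CAL S\cup\{i\}$ and $\CAL I^\dagger:=\CAL S\cup\{i^\dagger\}$, where $\CAL S:=\CAL I\setminus\{i\}$ has size $k-1$. The unordered probability $\BAR Q(\CAL I)$ is the sum of $Q$ over all $k!$ orderings of $\CAL I$. There is a natural bijection between orderings of $\CAL I$ and orderings of $\CAL I^\dagger$: keep the elements of $\CAL S$ in place and put $i^\dagger$ in the slot occupied by $i$. It therefore suffices to show, for every fixed ordered tuple $(i_1,\dots,i_k)$ with $i$ at position $p$, that $Q$ does not decrease when $i$ is replaced by $i^\dagger$ at position $p$. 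Summing over the $k!$ orderings then gives $\BAR Q(\CAL I^\dagger)\ge\BAR Q(\CAL I)$.

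For a fixed ordering, look at the factors of
\AL{Q(i_1,\dots,i_k)=\prod_{j=1}^k\frac{q_{i_j}}{1-\sum_{j'<j}q_{i_{j'}}}}
one at a time. The factors with $j<p$ are identical in the two tuples, because their numerators and denominators involve only elements of $\CAL S$ placed before position $p$. At $j=p$ the denominator is the same and the numerator changes from $q_i$ to $q_{i^\dagger}\ge q_i$, so this factor does not decrease. For $j>p$ the numerator $q_{i_j}$ is unchanged. The denominator changes from $1-\sum_{j'<j,\,j'\ne p}q_{i_{j'}}-q_i$ to the same expression with $q_{i^\dagger}$ in place of $q_i$, so it weakly decreases and the factor weakly increases. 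All factors are nonnegative, so the product does not decrease.

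The one point needing care is that the denominators stay positive after the swap. For $j\le k$, the denominator in the swapped tuple is at least $q_{i_j}+\sum_{\text{unused}}q>0$, since the remaining mass includes the current element; if some $q$ vanish, both sides are $0$ or the inequality is trivial. Apart from this, the only real check is that the slot-preserving bijection is well-defined, i.e., that summing over permutations of $\CAL I$ matches summing over permutations of $\CAL I^\dagger$. This holds because both $\CAL I$ and $\CAL I^\dagger$ have exactly $k$ elements and the relabeling $i\mapsto i^\dagger$ is a bijection between them.
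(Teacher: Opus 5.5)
Your proposal is correct and follows essentially the same route as the paper, which also fixes an ordering, puts $i^\dagger$ in the slot of $i$, and notes that the factor at that slot grows while every later denominator shrinks by $q_{i^\dagger}-q_i$, then sums over the $k!$ orderings. The only difference is cosmetic: you compare factor by factor rather than bounding the ratio $Q(\varpi(\CAL I^\dagger))/Q(\varpi(\CAL I))$, which avoids dividing by $q_i$, and your positivity worry is moot because $\BM q$ is a softmax output with strictly positive entries.
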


\begin{proof}
Say $\CAL I=(i_1,\dots,i_k)$. Without loss of generality, say $i_1=i$, and let $\CAL I^\dagger:=(i^\dagger,i_2,\dots,i_k)$ denote the ordered subset after replacing $i$ with $i^\dagger$. For any permutation $\varpi\in\CAL P_k$, let $j_\varpi:=\varpi^{-1}(1)$ denote the order of $i$ under permutation $\varpi$ (i.e., $\varpi(\CAL I)_{j_\varpi}=i$). Since $q_i\le q_{i^\dagger}$, then
\AL{
\frac{Q(\varpi(\CAL I^\dagger))}{Q(\varpi(\CAL I))}={}&\frac{q_{i^\dagger}}{q_i}\prod_{j=j_\varpi+1}^k\frac{1-\sum_{j'=1}^{j-1}q_{i_{j'}}}{1-q_{i^\dagger}+q_i-\sum_{j'=1}^{j-1}q_{i_{j'}}}
\\={}&\frac{q_{i^\dagger}}{q_i}\prod_{j=j_\varpi+1}^k\frac1{1-\frac{q_{i^\dagger}-q_i}{1-\sum_{j'=1}^{j-1}q_{i_{j'}}}}
\\\ge{}&\frac{q_{i^\dagger}}{q_i}\prod_{j=j_\varpi+1}^k1=\frac{q_{i^\dagger}}{q_i}\ge1
.}
This means $Q(\varpi(\CAL I^\dagger))\ge Q(\varpi(\CAL I))$. It follows that
\AL{
\BAR Q((\CAL I\setminus\{i\})\cup\{i^\dagger\}){}
&=\BAR Q(\CAL I^\dagger)=\sum_{\varpi\in\CAL P_n}Q(\varpi(\CAL I^\dagger))
\\&\ge\sum_{\varpi\in\CAL P_n}Q(\varpi(\CAL I))=\BAR Q(\CAL I)
.\qedhere}
\end{proof}

We are now ready to prove Theorem~\ref{THM:topk}.

\begin{proof}[Proof of Theorem~\ref{THM:topk}]
Suppose that
\AL{\CAL I^\dagger:=\mathop{\RM{argtop}_k}_{i=1}^nq_i\ne\CAL I^*,}
where we break ties in $\RM{argtop}$ arbitrarily.
We will show that this premise leads to a contradiction.

Recall that by definition,
\AL{\BAR Q(\CAL I^*)=\Prb_{\CAL I\sim\BM q}[\CAL I=\CAL I^{*}]>\frac12.}
Since $\CAL I^\dagger\ne\CAL I^*$, then $k^\cap:=|\CAL I^*\cap\CAL I^\dagger|<k$. Say $\CAL I^*\setminus\CAL I^\dagger=\{i^*_1,\dots,i^*_{k-k^\cap}\}$, $\CAL I^\dagger\setminus\CAL I^*=\{i^\dagger_1,\dots,i^\dagger_{k-k^\cap}\}$. Construct a series of subsets inductively as follows. Define $\TLD{\CAL I}_0:=\CAL I^*$. For $j=1,\dots,k-k^\cap$, define $\TLD{\CAL I}_j$ by replacing $i_j^*$ from $\TLD{\CAL I}_{j-1}$ with $i_j^\dagger$ and inheriting all other LoRAs from $\TLD{\CAL I}_{j-1}$. Finally, we have $\TLD{\CAL I}_{k-k^\cap}=\CAL I^\dagger$. Since $\CAL I^\dagger$ consists of LoRAs $i$ with top-$k$ $q_i$, then $q_{i_j^*}\le q_{i_j^\dagger}$ for all $j=1,\dots,k-k^\cap$. Hence, by Lemma~\ref{LEM:swap}, $\BAR Q(\TLD{\CAL I}_j)\ge\BAR Q(\TLD{\CAL I}_{j-1})$ for all $j=1,\dots,k-k^\cap$. Together,
\AL{
\BAR Q(\CAL I^\dagger)=\BAR Q(\TLD{\CAL I}_{k-k^\cap})\ge\BAR Q(\TLD{\CAL I}_{k-k^\cap-1})\ge\cdots\ge\BAR Q(\TLD{\CAL I}_{0})=\BAR Q(\CAL I^*)>\frac12
.}
It follows that
\AL{\BAR Q(\CAL I^\dagger)+\BAR Q(\CAL I^*)>\frac12+\frac12=1.}
However, this contradicts the fact that
\AL{\BAR Q(\CAL I^\dagger)+\BAR Q(\CAL I^*)\le\sum_{\CAL I}\BAR Q(\CAL I)=1,}
falsifying the premise. Therefore,
\AL{&\mathop{\RM{argtop}_k}_{i=1}^nq_i=\CAL I^*.\qedhere}
\end{proof}

\end{document}